\documentclass{article}
\usepackage[preprint]{neurips_2026}

\usepackage[utf8]{inputenc}
\usepackage[T1]{fontenc}
\usepackage{hyperref}
\usepackage{url}
\usepackage{booktabs}
\usepackage{amsfonts}
\usepackage{amsmath}
\usepackage{amssymb}
\usepackage{amsthm}
\usepackage{array}
\newtheorem{assumption}{Assumption}
\newtheorem{theorem}{Theorem}
\newtheorem{lemma}[theorem]{Lemma}
\newtheorem{proposition}[theorem]{Proposition}
\newtheorem{corollary}[theorem]{Corollary}
\theoremstyle{definition}

\theoremstyle{remark}
\newtheorem{remark}{Remark}

\usepackage{nicefrac}
\usepackage{microtype}
\usepackage[table]{xcolor}
\usepackage{multirow}
\usepackage{graphicx}
\usepackage{siunitx}  
\usepackage{algorithm}
\usepackage{algpseudocode}
\usepackage{enumitem}

\usepackage{booktabs}
\usepackage{multirow}
\usepackage{xcolor}
\usepackage{colortbl}
\usepackage{makecell}       
\usepackage{subcaption} 

\newcommand{\safeincludegraphics}[2][]{%
  \IfFileExists{#2}{%
    \includegraphics[#1]{#2}%
  }{%
    \fbox{%
      \begin{minipage}[c][0.48\linewidth][c]{0.90\linewidth}
        \centering\footnotesize Missing figure\\[0.4ex]
        \texttt{\detokenize{#2}}
      \end{minipage}%
    }%
  }%
}
\definecolor{rowA}{RGB}{248,247,244}
\definecolor{best}{RGB}{255,245,220}  
\definecolor{bestrow}{RGB}{234,243,222}
\definecolor{headercolor}{RGB}{245,245,248}
\definecolor{grporow}{RGB}{250,250,252}
\definecolor{gaingreen}{RGB}{59,109,17}
\definecolor{neutralgray}{RGB}{95,94,90}

\title{Gradient Extrapolation-Based Policy Optimization}

\author{
Ismam Nur Swapnil\textsuperscript{1} \quad Aranya Saha\textsuperscript{2}\thanks{Work done before joining the University of Maryland, College Park.} \quad Tanvir Ahmed Khan\textsuperscript{3} \\[1ex]
\textbf{Mohammad Ariful Haque}\textsuperscript{1} \quad \textbf{Ser-Nam Lim}\textsuperscript{4} \\[2ex]
\textsuperscript{1}Bangladesh University of Engineering and Technology \\
\textsuperscript{2}University of Maryland, College Park \\
\textsuperscript{3}Illinois Institute of Technology \quad \textsuperscript{4}University of Central Florida
}

\begin{document}

\maketitle

\begin{abstract}
Reinforcement learning is widely used to improve the reasoning ability of large language models, especially when answers can be automatically checked. Standard GRPO-style training updates the model using only the current step, while full multi-step lookahead can give a better update direction but is too expensive because it needs many backward passes. We propose \textbf{Gradient Extrapolation-Based Policy Optimization (GXPO)}, a plug-compatible policy-update rule for GRPO-style reasoning RL. GXPO approximates a longer local lookahead using only three backward passes during an active phase. It reuses the same batch of rollouts, rewards, advantages, and GRPO loss, so it does not require new rollouts or reward computation at the lookahead points. GXPO takes two fast optimizer steps, measures how the gradients change, predicts a virtual \(K\)-step lookahead point, moves the policy partway toward that point, and then applies a corrective update using the true gradient at the new position. When the lookahead signal becomes unstable, GXPO automatically switches back to standard single-pass GRPO. We also give a plain-gradient-descent surrogate analysis that explains when the extrapolation is exact and where its local errors come from. Across Qwen2.5 and Llama math-reasoning experiments, GXPO improves the average sampled pass@1 by \(\mathbf{+1.65}\) to \(\mathbf{+5.00}\) points over GRPO and by \(\mathbf{+0.14}\) to \(\mathbf{+1.28}\) points over the strongest SFPO setting, while keeping the active-phase cost fixed at three backward passes. It also achieves up to \(\mathbf{4.00\times}\) step speedup, \(\mathbf{2.33\times}\) wall-clock speedup, and \(\mathbf{1.33\times}\) backward-pass speedup in reaching GRPO's peak accuracy.
\end{abstract}

\section{Introduction}
\label{sec:intro}

Policy-gradient reinforcement learning underlies modern language-model alignment and reasoning \citep{sutton1999policy, williams1992reinforce, schulman2017ppo}. In reinforcement learning with verifiable rewards (RLVR), models generate candidate solutions, receive verifiable rewards, and update the policy with first-order gradients \citep{shao2024deepseekmath, yu2025dapo}. This setting is central to MATH and GSM8K-style reasoning benchmarks, where correctness can be checked automatically after long-form generation \citep{hendrycks2021math, cobbe2021gsm8k}. At LLM scale, each added backward pass multiplies training time and memory, putting update quality in tension with per-step cost during repeated post-training iterations.

\begin{figure}
    \centering
    \includegraphics[width=\linewidth]{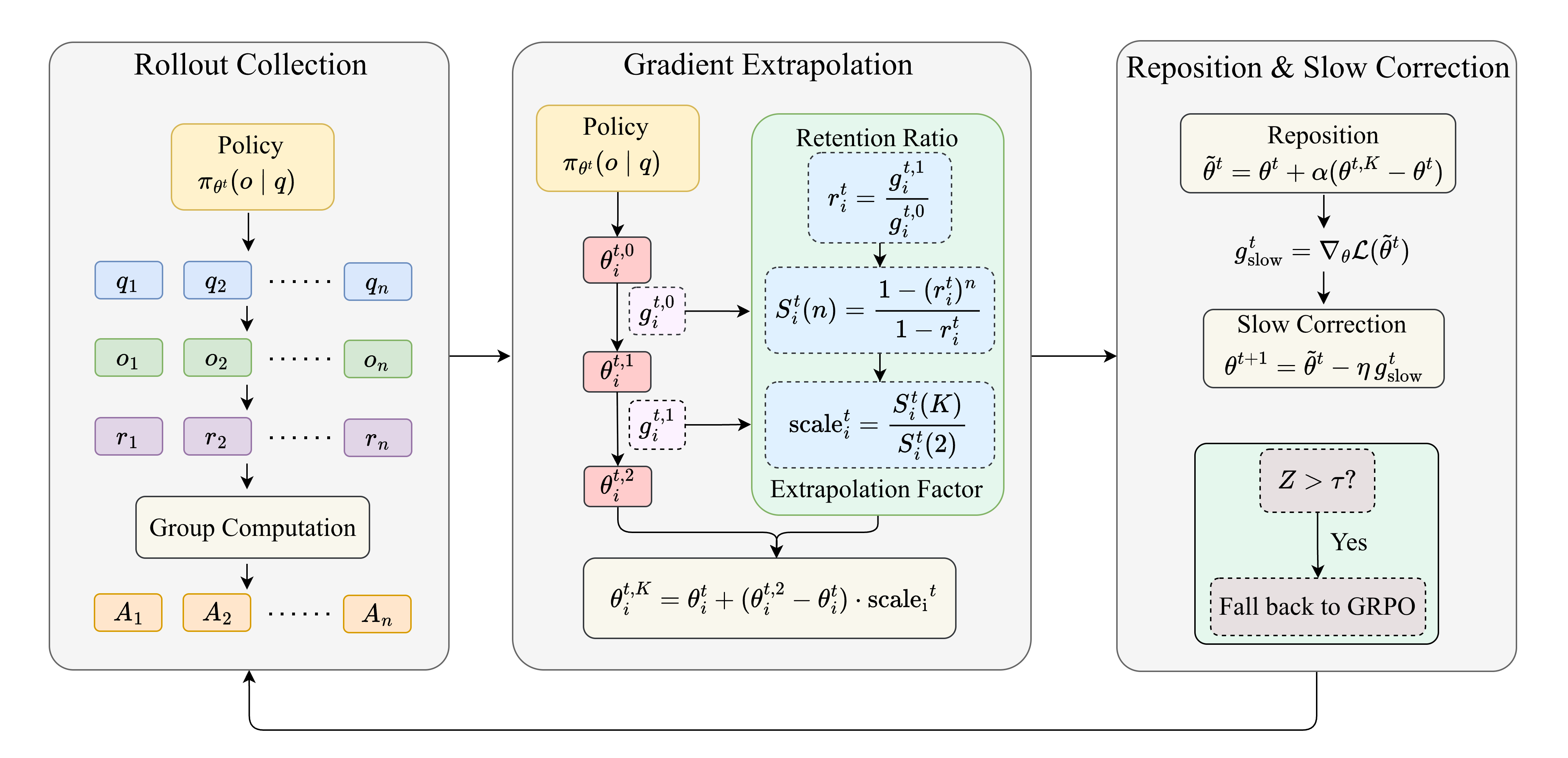}
    \caption{Overall GXPO training framework. Each active step performs three backward passes: two probe gradients during gradient extrapolation and one corrective gradient $g_{\text{slow}}$ at the repositioned point $\tilde{\theta}^t$. The slow correction is always applied at the current step. After the update, the $z$-score gate checks $\|g_{\text{slow}}\|$; if $Z > \tau$, all subsequent steps permanently fall back to single-pass GRPO.}
    \label{fig:gxpo_framework}
\end{figure}

Single-step methods such as PPO and GRPO avoid this cost but use only the current-policy gradient \citep{schulman2017ppo, shao2024deepseekmath}. Explicit lookahead recovers trajectory information by paying for it: $h$-step policy mirror descent improves regularized policy iteration \citep{protopapas2024pmdlookahead}, tree-expansion policies reduce policy-gradient variance \citep{dalal2025softtreemax}, and online or adaptive planning uses learned-model rollouts for action selection \citep{sikchi2021loop, rosenberg2023adaptive}. These methods rely on planning, tree expansion, or auxiliary value/model estimates rather than the rollout batch already in hand, making them difficult to drop into standard GRPO-style reasoning pipelines without changing the data path. Recent LLM reasoning-RL work improves stability and efficiency through objectives, values, entropy control, filtering, rewards, recipes, or off-policy reuse \citep{liu2025r1zero, dai2025stable, yue2025vapo, xiong2025minimalist, cui2025entropy, wen2025rlvrreasoning, fatemi2025concise, shen2025sgrpo, mroueh2025effective, mroueh2025revisiting}, and through sample selection, down-sampling, selective rollouts, few-example training, test-time scaling, or RLHF/generation-system acceleration \citep{chen2023alpagasus, xia2024less, ye2025limo, li2025limr, xu2025pods, wang2025onetraining, zheng2025selective, muennighoff2025s1, sheng2025hybridflow, kwon2023pagedattention, he2025rhymerl}. GXPO is orthogonal: it keeps the rollout batch, rewards, advantages, and GRPO loss unchanged, and only changes the policy update. This makes the update rule a narrow intervention, not a new reasoning-RL training recipe or reward pipeline component.

Optimizer-side lookahead is closer to our setting. The Lookahead optimizer interleaves fast inner steps with a slow update \citep{zhang2019lookahead, zhou2021lookahead}, and SFPO ports this idea to policy optimization with $K$ fast inner steps before a slow correction \citep{wang2026sfpo}---paying $K{+}1$ backward passes per update. \textit{The question we address is whether a policy update can gain similar gradient-trajectory information to explicit lookahead methods without increasing the number of backward passes.}

We introduce \textbf{Gradient Extrapolation-Based Policy Optimization (GXPO)}, a GRPO-compatible update that approximates $K$ local policy-gradient steps with three backward passes, independent of virtual depth $K$ (see Figure~\ref{fig:gxpo_framework}). GXPO reuses the rollout batch, rewards, advantages, and regularization; estimates a per-coordinate retention ratio $r_i = g_{1,i}/g_{0,i}$ from two probe gradients; moves toward the geometric displacement; and applies a corrective gradient at the repositioned policy, so the final step remains anchored to the true objective rather than the extrapolated prediction. A rolling z-score gate falls back to a single-pass update when the corrective-gradient norm becomes unstable during training. Our contributions are:

\begin{itemize}
\item A GRPO-compatible update with three active-phase backward passes for any virtual lookahead depth $K$, using two probe gradients, geometric extrapolation, and one corrective gradient;
\item A fixed-batch implementation that reuses rollouts, rewards, advantages, loss, and regularization, with a gate that reverts to the base single-pass update when the local trajectory signal becomes unreliable;
\item A local quadratic surrogate analysis, plus benchmark, budget, ablation, and diagnostic evidence across two model families.
\end{itemize}

\section{Method: Gradient Extrapolation-Based Policy Optimization}
\label{sec:method}

GXPO replaces a single GRPO update with a three-step update, while reusing the same rollouts, rewards, advantages, and objective, so no additional data or reward computation is required. During training, it first takes two quick optimization steps using the base actor optimizer (AdamW in our experiments \citep{loshchilov2019decoupled}) and observes how the parameters change. It then uses this change to estimate the direction of the update, moves partway in that direction, and applies one final corrective step. The theory studies a simplified version to clarify this behavior, while the experiments evaluate the method under the chosen optimizer.

\paragraph{Setup and notation.}
Let $\theta \in \mathbb{R}^d$ parameterise the policy and let $\mathcal{L}(\theta)$ be the GRPO loss. Write $g(\theta)=\nabla_\theta\mathcal{L}(\theta)$, $H(\theta)=\nabla_\theta^2\mathcal{L}(\theta)$, and $g_n=g(\theta_n)$. Let $\eta>0$ be the learning rate and $K$ the virtual depth.

\subsection{From Taylor Expansion to Geometric Scaling}
\label{subsec:taylor_geometric}

GXPO needs a cheap way to estimate how the gradient would change over several nearby steps. The local intuition is simple: if the loss curvature is approximately stable near $\theta_0$, then the gradient evolution can be approximated by a predictable local recurrence. Under a coordinate-wise surrogate, this means that each gradient coordinate retains a measurable fraction of its previous value. A Taylor expansion makes this intuition precise. Around $\theta_0$,
\begin{equation}
    g(\theta_0 + \Delta) = g(\theta_0) + H(\theta_0)\,\Delta + R_2(\Delta), \qquad \|R_2(\Delta)\| \leq \tfrac{M_3}{2}\|\Delta\|^2,
    \label{eq:grad_taylor}
\end{equation}
where $M_3=\sup_\xi\|\nabla^3\mathcal{L}(\xi)\|$. Dropping the remainder gives the local quadratic model $g(\theta_0+\Delta)\approx g_0+H_0\Delta$.

\begin{assumption}[Local quadratic model]
\label{assum:quadratic}
The Hessian is fixed at $H_0$ within the local extrapolation region.
\end{assumption}
This assumption is used only for extrapolation: small learning rates keep probes local, and the final update still uses the true gradient at the repositioned point.

In the plain-GD surrogate, for one gradient step $\theta_1=\theta_0-\eta g_0$, the model gives
\begin{equation}
    g_1 = g_0 - \eta\,H_0\,g_0 = (I - \eta\,H_0)\,g_0.
    \label{eq:g1}
\end{equation}
Repeating this recurrence yields:

\begin{theorem}[Gradient evolution under the local quadratic model]
\label{thm:gn_operator}
Under Assumption~\ref{assum:quadratic}, the gradient at the $n$-th gradient descent iterate
$\theta_n = \theta_{n-1} - \eta\,g_{n-1}$ satisfies
\begin{equation}
    g_n = (I - \eta\,H_0)^n\,g_0.
    \label{eq:gn_operator}
\end{equation}
\end{theorem}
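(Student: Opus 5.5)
The plan is induction on $n$, using Assumption~\ref{assum:quadratic} in the exact form
\[
g(\theta)=g_0+H_0(\theta-\theta_0)
\]
for every $\theta$ in the local extrapolation region. The first task is to state precisely what the assumption gives. If the Hessian is constant and equal to $H_0$ along the segment from $\theta_0$ to $\theta$, the fundamental theorem of calculus applied to $t\mapsto g(\theta_0+t(\theta-\theta_0))$ yields this identity exactly. Equivalently, the remainder $R_2$ in \eqref{eq:grad_taylor} vanishes. Subtracting the identity at two points $\theta,\theta'$ in the region gives the increment relation
\[
g(\theta')-g(\theta)=H_0(\theta'-\theta),
\]
and the whole argument rests on this.

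The base case $n=0$ is trivial, since $(I-\eta H_0)^0g_0=g_0$. The case $n=1$ is exactly \eqref{eq:g1}. For the inductive step, suppose $g_{n-1}=(I-\eta H_0)^{n-1}g_0$. The gradient descent update gives $\theta_n-\theta_{n-1}=-\eta g_{n-1}$. The increment relation then yields
\[
g_n=g_{n-1}-\eta H_0 g_{n-1}=(I-\eta H_0)g_{n-1}.
\]
Substituting the inductive hypothesis gives $g_n=(I-\eta H_0)^n g_0$.

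As an alternative check, I would also unroll directly. Summing the updates gives $\theta_n-\theta_0=-\eta\sum_{k<n}g_k$, so
\[
g_n=g_0-\eta H_0\sum_{k<n}g_k.
\]
One can then verify that $g_k=(I-\eta H_0)^kg_0$ satisfies this relation via the telescoping identity $\eta H_0\sum_{k<n}(I-\eta H_0)^k=I-(I-\eta H_0)^n$. This identity holds because $H_0$ commutes with $(I-\eta H_0)$.

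The main obstacle is not the algebra. It is making sure every iterate $\theta_1,\dots,\theta_n$, and the segments between consecutive iterates, stay inside the local region where the Hessian equals $H_0$. I would handle this by reading Assumption~\ref{assum:quadratic} as holding on a region containing the iterates used by the extrapolation, which matches the paper's remark that small learning rates keep the probes local. If a quantitative statement is wanted, $\|\theta_n-\theta_0\|\le \eta\sum_{k<n}\|I-\eta H_0\|^k\|g_0\|$ bounds the excursion, so a ball of that radius around $\theta_0$ suffices.
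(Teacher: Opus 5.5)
Your proof is correct. Your main argument differs from the paper's in one respect. You subtract the model identity at consecutive iterates to get $g(\theta_n)-g(\theta_{n-1})=H_0(\theta_n-\theta_{n-1})$. This gives the one-step recurrence $g_n=(I-\eta H_0)g_{n-1}$ directly, so ordinary induction finishes at once.

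The paper never forms this recurrence. It keeps the model anchored at $\theta_0$ and uses the cumulative displacement to write $g_{n+1}=g_0-\eta H_0\sum_{k\le n}g_k$. It then invokes the hypothesis for all $k\le n$ (strong induction) and closes with the telescoping identity $(I-A)\sum_{k=0}^{n}A^k=I-A^{n+1}$. That is exactly your ``alternative check.''

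Each route has its advantage. Yours is shorter and shows the theorem is just the relation $g_1=(I-\eta H_0)g_0$ reapplied at each iterate. The paper's costs a telescoping step, but it works throughout with $\theta_n-\theta_0=-\eta\sum_k g_k$, the same quantity GXPO later rescales through geometric sums.

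Your handling of locality is more explicit than the paper's, which simply takes the iterates to lie in the extrapolation region. Bounding the excursion by $\eta\sum_{k<n}\|I-\eta H_0\|^k\|g_0\|$ is sound, provided you run it jointly with the induction: membership of $\theta_0,\dots,\theta_{n-1}$ in the ball gives the formula up to $n-1$, which in turn places $\theta_n$ in the ball.
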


See Appendix~\ref{app:proof_thm_gn} for the proof.

\subsection{The Per-Parameter Retention Ratio}
\label{subsec:retention_ratio}

The full local recurrence involves the Hessian, which is impossible to form or multiply at LLM scale. GXPO therefore measures gradient evolution directly from two nearby gradients. For each sufficiently active coordinate, the ratio $g_{1,i}/g_{0,i}$ estimates how much of that coordinate's gradient is retained after one local step. This ratio is not treated as an exact Hessian quantity in the implemented optimizer update; it is an empirical signal measured along the realized fast optimizer trajectory. Formally, for active coordinates, GXPO defines
\begin{equation}
    r_i \;\equiv\; \frac{g_{1,i}}{g_{0,i}}.
    \label{eq:retention}
\end{equation}
In the plain-GD surrogate, where $\theta_1=\theta_0-\eta g_0$, the local quadratic model gives
\[
    r_i
    =
    1-\eta\frac{[H_0g_0]_i}{g_{0,i}}.
\]
In finite precision, Algorithm~\ref{alg:GXPO} evaluates this ratio only on the active set $\mathcal{A}_t=\{i: |g_i^{t,0}|>\delta\}$. For $i\notin\mathcal{A}_t$, no ratio is formed and the observed two-probe displacement is kept. On active coordinates, $r_i$ measures local gradient retention: $r_i\approx1$ is nearly flat, $0<r_i<1$ is contraction, and $r_i<0$ indicates overshoot.

The per-parameter gradient dynamics are approximated by a coordinate-wise geometric sequence:
\begin{equation}
    g_{n,i} \approx r_i^n\,g_{0,i}.
    \label{eq:geometric_decay}
\end{equation}
In the plain-GD surrogate, this is exact for diagonal Hessians. For general Hessians, Appendix~\ref{app:geometric_decay} and Appendix~\ref{app:displacement_error} bound the effects of off-diagonal coupling and Taylor remainder.

\subsection{Scaled $K$-Step Extrapolation and Repositioning}
\label{subsec:k_step_scaled}

The two fast steps give a short observed displacement, but GXPO wants to approximate a longer $K$-step lookahead without taking all $K$ steps. If a coordinate's gradient follows the measured retention pattern, then the displacement over $K$ steps is a scaled version of the observed two-step displacement. GXPO uses this geometric scale to predict a longer lookahead point, but moves only partway toward it using $\alpha$ to reduce the effect of extrapolation error. In the GD surrogate, the $K$-step displacement is
\[
    \theta_K - \theta_0 = -\eta\sum_{n=0}^{K-1} g_n.
\]
GXPO uses this identity only to motivate the geometric scaling rule. In the implemented optimizer-state-aware version, GXPO instead observes the actual two-step optimizer displacement $\theta_2 - \theta_0$ and scales this measured displacement. Therefore, the method does not require the implemented optimizer displacement to equal a sum of raw gradients.

Using the geometric model in \eqref{eq:geometric_decay}, coordinate $i$ moves approximately as
\begin{equation}
    [\theta_K - \theta_0]_i \;\approx\; -\eta\,g_{0,i}\sum_{n=0}^{K-1} r_i^n
    \;=\;
    -\eta\,g_{0,i}\;\frac{1 - r_i^K}{1 - r_i}
    \qquad (r_i \neq 1).
    \label{eq:k_step_displacement}
\end{equation}

GXPO already observes the two-step displacement $\theta_2 - \theta_0$. It converts this into a $K$-step estimate by multiplying each coordinate by the ratio of the $K$-step geometric sum to the two-step geometric sum:
\begin{equation}
    \mathrm{scale}_i = \frac{S_i(K)}{S_i(2)},
    \qquad
    S_i(n) = \frac{1 - r_i^n}{1 - r_i},
    \label{eq:scale_factor}
\end{equation}
which simplifies to $(1 - r_i^K)/(1 - r_i^2)$ for $r_i \neq 1$. The predicted $K$-step point is then
\begin{equation}
    \theta_K = \theta_0 + (\theta_2 - \theta_0) \odot \mathrm{scale}.
    \label{eq:k_step_position}
\end{equation}

\noindent GXPO then moves only partway toward this prediction:
\begin{equation}
    \tilde{\theta} = \theta_0 + \alpha(\theta_K - \theta_0).
    \label{eq:blend_step}
\end{equation}
Here $\alpha \in [0,1]$ controls the reposition strength: $\alpha = 0$ ignores the prediction entirely, while $\alpha = 1$ adopts it fully. The small denominator terms in \eqref{eq:scale_factor} serve as numerical stabilizers. The theory in \S\ref{subsec:theory} and Appendix~\ref{app:proofs} analyzes the clean version of this rule.

Since the extrapolated point is only a prediction, GXPO does not directly trust the extrapolated gradient. Instead, it evaluates the true loss gradient at the repositioned point, $g_{\mathrm{slow}} = \nabla_\theta \mathcal{L}(\tilde{\theta})$. This corrective gradient anchors the update back to the actual objective and reduces the risk of following an inaccurate geometric prediction. The final optimizer step uses $g_{\mathrm{slow}}$ rather than the extrapolated prediction. For stateful optimizers such as AdamW, the two fast steps update the optimizer state, GXPO manually repositions the parameters to $\tilde{\theta}$, and the third step is taken using $g_{\mathrm{slow}}$. The analysis below studies the corresponding gradient-descent surrogate.

\paragraph{Adaptive rule in practice.}
\label{subsec:adaptive_shutoff}
GXPO uses geometric extrapolation only when the local training behavior appears stable. Since gradient norms naturally change during training, a fixed absolute threshold is unreliable. Instead, GXPO maintains a rolling buffer of recent corrective-gradient norms and uses it as a local baseline for the current training stage. If the corrective gradient at the repositioned point becomes unusually large relative to this recent history, GXPO treats the extrapolation as unreliable and disables it; training then continues with the same single-pass GRPO update as the base trainer.

Formally, before inserting the current norm into the buffer, let $\mu_t$ and $\sigma_t$ be the mean and standard deviation of the past $w$ corrective-gradient norms, and define
\begin{equation}
    Z_t = \frac{\|g_{\mathrm{slow}}^{(t)}\|_2 - \mu_t}{\sigma_t + \epsilon},
    \label{eq:z_score}
\end{equation}
where $\epsilon > 0$. During warm-up, GXPO only fills the buffer. If $Z_t \ge \tau$, it sets $s^\star = t + 1$ and reverts to single-backward-pass GRPO for all subsequent steps. Thus, extrapolation is disabled whenever the corrective-gradient norm exhibits a sharp upward shift relative to its recent baseline. Over $T$ steps, the backward-pass budget becomes $3s^\star + (T - s^\star)$ instead of $3T$.

\begin{algorithm}[t]
\caption{GXPO: Gradient Extrapolation-Based Policy Optimization}
\label{alg:GXPO}
\begin{algorithmic}[1]
\Require Parameters $\theta^{0}\in\mathbb{R}^d$; learning rate $\eta$; virtual steps $K$; blend $\alpha$; stability $\delta$; trigger threshold $\tau$;
\State \textbf{Initialize:} rolling buffer $\mathcal{B}\leftarrow\emptyset,\; s^\star\leftarrow+\infty$
\For{each training step $t=0,1,2,\dots$}
    \State $g^{t,0}\leftarrow\nabla_\theta\mathcal{L}(\theta^{t})\in\mathbb{R}^d$
    \Comment{Backward pass 1}

    \If{$t<s^\star$}
        \Comment{Active phase: coordinate-wise geometric extrapolation}

        \State $\theta^{t,1}\leftarrow\mathrm{OptimStep}(\theta^{t}, g^{t,0})$
        \Comment{First fast optimizer step}

        \State $g^{t,1}\leftarrow\nabla_\theta\mathcal{L}(\theta^{t,1})\in\mathbb{R}^d$
        \Comment{Backward pass 2}

        \State $\theta^{t,2}\leftarrow\mathrm{OptimStep}(\theta^{t,1}, g^{t,1})$
        \Comment{Second fast optimizer step}

        \State $\mathcal{A}_t\leftarrow\{i\in[d]: |g_i^{t,0}|>\delta\}$
        \Comment{Active coordinates}

        \State $r_i^t \leftarrow g_i^{t,1}/g_i^{t,0},\quad \forall i\in\mathcal{A}_t$
        \Comment{Retention ratio}

        \State $S_i^t(n)\leftarrow \dfrac{1-(r_i^t)^n}{1-r_i^t},\quad \forall i\in\mathcal{A}_t$
        \Comment{Geometric sum}

        \State $\mathrm{scale}_i^t\leftarrow
        \begin{cases}
            \dfrac{S_i^t(K)}{S_i^t(2)}, & i\in\mathcal{A}_t,\\
            1, & i\notin\mathcal{A}_t,
        \end{cases}$
        \Comment{Extrapolation factor}

        \State $\theta^{t,K}
        \leftarrow
        \theta^{t}
        +
        (\theta^{t,2}-\theta^{t})\odot \mathrm{scale}^{t}$
        \Comment{Extrapolate}

        \State $\widetilde{\theta}^{t}
        \leftarrow
        \theta^{t}
        +
        \alpha(\theta^{t,K}-\theta^{t})$
        \Comment{Reposition}

        \State $g_{\mathrm{slow}}^{t}
        \leftarrow
        \nabla_\theta\mathcal{L}(\widetilde{\theta}^{t})\in\mathbb{R}^d$
        \Comment{Backward pass 3}

        \State $\theta^{t+1}
        \leftarrow
        \mathrm{OptimStep}(\widetilde{\theta}^{t}, g_{\mathrm{slow}}^{t})$
        \Comment{Slow correction}

        \If{$\mathrm{len}(\mathcal{B})>1$}
            \State Compute rolling statistics $\mu_t$ and $\sigma_t$ from the current buffer $\mathcal{B}$

            \State $Z_t\leftarrow\dfrac{\|g_{\mathrm{slow}}^{t}\|_2-\mu_t}{\sigma_t+\epsilon}$
            \Comment{Adaptive rule in \S\ref{subsec:adaptive_shutoff}}

            \If{$Z_t\geq\tau$}
                \State $s^\star\leftarrow t+1$
                \Comment{Permanently disable extrapolation}
            \EndIf
        \EndIf

        \State Update rolling buffer $\mathcal{B}$ with $\|g_{\mathrm{slow}}^{t}\|_2$

    \Else
        \State $\theta^{t+1}\leftarrow\mathrm{OptimStep}(\theta^t, g^{t,0})$
        \Comment{Fallback phase: single-step GRPO}
    \EndIf
\EndFor
\end{algorithmic}
\end{algorithm}

\subsection{Surrogate Analysis}
\label{subsec:theory}

The analysis below explains the parameter-space extrapolation used by GXPO through a plain-gradient-descent surrogate. In this surrogate, we can show when the geometric scaling is exact, where its errors come from, and which diagnostics should be checked in practice. The implemented method still uses the chosen actor optimizer, AdamW in our experiments, so the surrogate should be read as a model of the extrapolation geometry rather than the full stateful optimizer dynamics of LLM training.

We first consider the clean case where the coordinate-wise geometric model is exact.

\begin{corollary}[Diagonal-quadratic GD-surrogate sanity check]
\label{cor:full_extrapolation_rate}
Consider the global diagonal quadratic loss
\[
    \mathcal{L}(\theta) = \frac{1}{2}\theta^\top H_0\theta,
    \qquad
    H_0 = \mathrm{diag}(h_1,\dots,h_d),
    \qquad
    h_i>0,
    \qquad
    \eta h_i\leq 1.
\]
Assume all nonzero-gradient coordinates are active, finite-precision stabilizers are omitted, and $\alpha=1$. Let
\[
    \mu := \min_i h_i > 0,
    \qquad
    \rho := (1-\eta\mu)^2 \in [0,1).
\]
Then one clean GXPO outer step with three backward passes reaches the same point as $K+1$ plain-GD steps:
\[
    \theta_{\text{new}}^{\mathrm{GXPO}}
    =
    \theta_{K+1}^{\mathrm{GD}},
\]
and consequently, after $B \in 3\mathbb{N}$ backward passes,
\[
    \mathcal{L}\!\left(\theta_{B/3}^{\mathrm{GXPO}}\right)
    \le
    \rho^{(K+1)B/3}\mathcal{L}(\theta_0),
\]
hence, if $0<\rho<1$,
\[
    B_{\mathrm{GXPO}} = O\!\left(\frac{3}{K+1}\log\frac{1}{\varepsilon}\right).
\]
This is only an algebraic sanity check: in the easiest case, the extrapolated point lands exactly where multiple GD steps would land. Appendix~\ref{app:convergence} proves Corollary~\ref{cor:full_extrapolation_rate}.
\end{corollary}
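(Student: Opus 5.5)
The plan is to verify, coordinate by coordinate, that every ingredient of the clean GXPO step coincides exactly with plain gradient descent. On the global diagonal quadratic, Assumption~\ref{assum:quadratic} holds with no remainder, and $H_0$ is diagonal. So Theorem~\ref{thm:gn_operator} applies coordinatewise and gives $g_{n,i}=(1-\eta h_i)^n g_{0,i}$ along the GD iterates. Equivalently, since $g=H_0\theta$, we have $\theta_{n,i}=(1-\eta h_i)^n\theta_{0,i}$.

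The first step handles the active coordinates ($g_{0,i}\neq 0$). The first two fast steps are exactly GD steps, so $\theta_1,\theta_2$ are the GD iterates. The measured ratio is then $r_i=g_{1,i}/g_{0,i}=1-\eta h_i$ exactly, i.e.\ the geometric model \eqref{eq:geometric_decay} holds with equality. Because $h_i>0$ and $\eta h_i\le 1$, we have $r_i\in[0,1)$. This excludes the degenerate values $r_i=1$ (where $S_i$ is undefined) and $r_i=-1$ (where $S_i(2)=0$), so the scale factor is well defined. The case $r_i=0$ is harmless.

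The second step computes the extrapolated point. The observed displacement is $[\theta_2-\theta_0]_i=-\eta g_{0,i}(1+r_i)=-\eta g_{0,i}S_i(2)$. Multiplying by $\mathrm{scale}_i=S_i(K)/S_i(2)$ gives $-\eta g_{0,i}S_i(K)=-\eta\sum_{n=0}^{K-1}g_{n,i}=[\theta_K^{\mathrm{GD}}-\theta_0]_i$. For inactive coordinates ($g_{0,i}=0$, hence $\theta_{0,i}=0$), the scale is $1$ and the displacement is $0$, which again matches GD because those coordinates never move. With $\alpha=1$, \eqref{eq:blend_step} therefore gives $\tilde\theta=\theta_K^{\mathrm{GD}}$. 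The slow correction is then one more GD step with the true gradient at that point, so $\theta_{\text{new}}^{\mathrm{GXPO}}=\theta_K^{\mathrm{GD}}-\eta g(\theta_K^{\mathrm{GD}})=\theta_{K+1}^{\mathrm{GD}}$.

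The third step derives the rate. Each GD step multiplies coordinate $i$ of $\theta$ by $1-\eta h_i\in[0,1-\eta\mu]$, so it multiplies that coordinate's loss term $\tfrac12 h_i\theta_i^2$ by $(1-\eta h_i)^2\le\rho$. Summing over coordinates gives $\mathcal{L}(\theta_{n+1}^{\mathrm{GD}})\le\rho\,\mathcal{L}(\theta_n^{\mathrm{GD}})$. The nonnegativity $1-\eta h_i\ge 0$ is where $\eta h_i\le1$ is used again. One outer step equals $K+1$ GD steps and costs three backward passes. Iterating $B/3$ outer steps therefore yields the factor $\rho^{(K+1)B/3}$.

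Finally, for $0<\rho<1$, requiring $\rho^{(K+1)B/3}\le\varepsilon$ gives $B\ge \frac{3}{K+1}\,\frac{\log(1/\varepsilon)}{\log(1/\rho)}$. Treating $\log(1/\rho)$ as a problem constant, this is the stated $O\!\left(\frac{3}{K+1}\log\frac1\varepsilon\right)$; when $\rho=0$, convergence occurs in one outer step.

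The main obstacle is the bookkeeping for edge cases rather than any deep estimate. One must ensure that $\mathrm{scale}_i$ is well defined, i.e.\ that no $r_i$ equals $\pm1$. One must also check that the inactive-coordinate convention (scale $1$) agrees with GD. Under $\delta=0$ and the stated assumptions both checks succeed, so I would state them explicitly before running the algebra.
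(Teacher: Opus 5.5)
Your proposal is correct and follows essentially the same route as the paper's proof in Appendix~\ref{app:convergence}: the ratio is exactly $r_i=1-\eta h_i$, and scaling the observed two-step displacement by $S_K(r_i)/S_2(r_i)$ lands on $\theta_K^{\mathrm{GD}}$. The corrective step with $\alpha=1$ then gives $\theta_{K+1}^{\mathrm{GD}}$, and the per-coordinate contraction $r_i^2\le\rho$ together with $B=3m$ yields the rate. Your explicit checks are details the paper leaves implicit: that $r_i\in[0,1)$ keeps the scale factor well defined, and that zero-gradient coordinates (where $\theta_{0,i}=0$) never move. Likewise, your $\log(1/\varepsilon)$ versus the paper's $\log(\mathcal{L}(\theta_0)/\varepsilon)$ is only a relative-versus-absolute accuracy convention and does not affect the $O(\cdot)$ claim.
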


Real losses are not diagonal quadratics, so the next result bounds the local error of the GD surrogate.

\begin{theorem}[Local displacement-error bound for the GD surrogate]
\label{thm:error_bound_main}
Suppose $K\geq2$, $\mathcal{L}\in C^3$, $\sup_\xi\|\nabla^3\mathcal{L}(\xi)\|\leq M_3$, and the true GD trajectory satisfies
\[
    \sup_{0\leq n<K}\|g(\theta_n^{\mathrm{true}})\|\leq G.
\]
Let $\rho_\star\geq1$ and $\rho_\star\geq\|I-\eta H_0\|$. Split coordinates into
\[
    \mathcal{A}=\{i:|g_{0,i}|>\delta\},
    \qquad
    \mathcal{S}=\mathcal{A}^c.
\]
Consider the clean active-set surrogate that uses empirical ratios on $\mathcal{A}$ and the observed two-probe displacement on $\mathcal{S}$. If the active empirical ratios and diagonal surrogate rates are bounded by $R$, then
\[
    \|\theta_K^{\mathrm{emp}} - \theta_K^{\mathrm{true}}\|
    \leq
    E_{\mathrm{off}}
    +
    E_{\mathrm{ratio}}
    +
    E_{\mathrm{nonquad}},
\]
where $E_{\mathrm{off}}$ comes from off-diagonal Hessian coupling, $E_{\mathrm{ratio}}$ from empirical-ratio error and inactive-coordinate fallback, and $E_{\mathrm{nonquad}}$ from the Taylor remainder. The explicit constants are given in Theorem~\ref{thm:displacement_error}, Lemma~\ref{lem:empirical_ratio_error}, and Corollary~\ref{cor:combined_empirical_bound} in Appendix~\ref{app:displacement_error}; together they prove
\[
    E_{\mathrm{off}}
    =
    O\!\left(K^2\eta^2\|H_0^{\mathrm{off}}\|\|g_0\|\rho_\star^{K-2}\right),
\]
\[
    E_{\mathrm{ratio}}
    =
    O(\eta^2/\delta)
    +
    O(\eta\|g_{0,\mathcal S}\|_1)
    +
    O\!\left(\eta^2\|(H_0g_0)_{\mathcal S}\|_1\right),
\]
\[
    E_{\mathrm{nonquad}}
    =
    O\!\left(K^3\eta^3M_3G^2\rho_\star^{K-1}\right).
\]
\end{theorem}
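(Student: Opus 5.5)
The plan is to insert two intermediate trajectories between $\theta_K^{\mathrm{emp}}$ and $\theta_K^{\mathrm{true}}$ and apply the triangle inequality. The first is $\theta_K^{\mathrm{quad}}=\theta_0-\eta\sum_{n<K}(I-\eta H_0)^n g_0$, the exact GD trajectory under Assumption~\ref{assum:quadratic}, given by Theorem~\ref{thm:gn_operator}. The second is $\theta_K^{\mathrm{diag}}$, the clean GXPO prediction built from the diagonal surrogate rates $r_i^{\mathrm{diag}}=1-\eta h_{ii}$ on $\mathcal A$ and the exact observed two-probe displacement on $\mathcal S$. The three differences $\theta_K^{\mathrm{true}}-\theta_K^{\mathrm{quad}}$, $\theta_K^{\mathrm{quad}}-\theta_K^{\mathrm{diag}}$ and $\theta_K^{\mathrm{diag}}-\theta_K^{\mathrm{emp}}$ then give $E_{\mathrm{nonquad}}$, $E_{\mathrm{off}}$ and $E_{\mathrm{ratio}}$, respectively.

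For $E_{\mathrm{nonquad}}$, write the true gradient as $g_n^{\mathrm{true}}=(I-\eta H_0)g_{n-1}^{\mathrm{true}}+e_n$. By \eqref{eq:grad_taylor} applied around $\theta_{n-1}^{\mathrm{true}}$, and by comparing $H(\theta_{n-1})$ with $H_0$ through $M_3$ and the path length $\|\theta_{n-1}-\theta_0\|\le n\eta G$, we get $\|e_n\|\lesssim M_3\eta^2 G^2 n$. Unrolling gives $g_n^{\mathrm{true}}-(I-\eta H_0)^n g_0=\sum_{m\le n}(I-\eta H_0)^{n-m}e_m$, whose norm is at most $\rho_\star^{n-1}\sum_m\|e_m\|=O(n^2\eta^2M_3G^2\rho_\star^{n-1})$; here we use $\rho_\star\ge1$. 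Multiplying by $\eta$ and summing over $n<K$ gives $O(K^3\eta^3M_3G^2\rho_\star^{K-1})$.

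For $E_{\mathrm{off}}$, split $H_0=D+H_0^{\mathrm{off}}$ and telescope $(I-\eta H_0)^n-(I-\eta D)^n=\sum_j(I-\eta H_0)^{j}(-\eta H_0^{\mathrm{off}})(I-\eta D)^{n-1-j}$. Its norm is at most $n\eta\|H_0^{\mathrm{off}}\|\rho_\star^{n-1}$, assuming $\|I-\eta D\|\le\rho_\star$, which the bound-$R$ hypothesis allows us to absorb. The key observation is that GXPO reproduces the first two steps exactly, because the scaling multiplies the observed $\theta_2-\theta_0$. The discrepancy therefore comes only from steps $n=2,\dots,K-1$, with $n=1$ also partially contributing via the scale. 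Tracking this carefully gives an error of $\eta\sum_{n<K}n\eta\|H_0^{\mathrm{off}}\|\|g_0\|\rho_\star^{n-1}$ plus the mismatch between the scaled observed displacement and the diagonal geometric sum. Both terms are $O(K^2\eta^2\|H_0^{\mathrm{off}}\|\|g_0\|\rho_\star^{K-2})$.

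For $E_{\mathrm{ratio}}$, take the active and inactive coordinates separately. On $\mathcal A$, the empirical ratio $g_{1,i}/g_{0,i}$ differs from $1-\eta h_{ii}$ by $\eta[H_0^{\mathrm{off}}g_0]_i/g_{0,i}$ plus the remainder contribution, which is $O(\eta^2/\delta)$ because $|g_{0,i}|>\delta$. The map $r\mapsto(1-r^K)/(1-r^2)=\sum$ of powers is Lipschitz on $[-R,R]$ with a constant polynomial in $K,R$, so multiplying by the displacement $O(\eta)$ bounds the active contribution by $O(\eta^2/\delta)$. On $\mathcal S$, the fallback uses scale $1$, so the error equals the missing steps $2,\dots,K-1$ of those coordinates. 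Each such step has size $\eta|g_{n,i}|\le\eta(|g_{0,i}|+n\eta|(H_0g_0)_i|+\dots)$, which after summation gives $O(\eta\|g_{0,\mathcal S}\|_1)+O(\eta^2\|(H_0g_0)_{\mathcal S}\|_1)$, with $K,R$-dependence hidden in the constants.

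The main obstacle is the $E_{\mathrm{off}}$ step, specifically keeping the power at $\rho_\star^{K-2}$ and the prefactor at $K^2$. This requires cancelling the exactly matched first two steps against the two-step part of the scaled displacement, rather than naively bounding both trajectories. I would first handle it by writing both $K$-step displacements as (exact two-step displacement) $+$ (tail from steps $2..K-1$) and comparing only the tails.
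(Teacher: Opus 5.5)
There is a genuine gap, and it comes from your choice of intermediate point. You define $\theta_K^{\mathrm{diag}}$ as the observed two-probe displacement rescaled by $S_K(\bar r_i)/S_2(\bar r_i)$. But the observed displacement is $(\theta_2-\theta_0)_i=-\eta(g_{0,i}+g_{1,i})=-\eta g_{0,i}(1+r_i)$, with the \emph{empirical} $r_i$. So the denominator $1+\bar r_i$ no longer cancels.

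Under the quadratic model, the ``mismatch'' you assign to $E_{\mathrm{off}}$ is exactly $\eta^2[H_0^{\mathrm{off}}g_0]_i\,S_K(\bar r_i)/(1+\bar r_i)$. Your $E_{\mathrm{ratio}}$ step then needs a Lipschitz constant for $r\mapsto(1-r^K)/(1-r^2)$. That map is a sum of powers only for even $K$. For odd $K$ it has a pole at $r=-1$; for example, with $K=3$ it equals $(1+r+r^2)/(1+r)$. Nothing in the hypotheses keeps $\bar r_i$ or $r_i$ away from $-1$ when $R\ge1$. So neither piece is bounded by the claimed $O(K^2\eta^2\|H_0^{\mathrm{off}}\|\|g_0\|\rho_\star^{K-2})$ or $O(\eta^2/\delta)$; the singular parts cancel only when the two pieces are added.

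The paper avoids this in two moves. First, since $(\theta_2-\theta_0)_i=-\eta g_{0,i}S_2(r_i)$ identically, the GXPO active displacement is just $-\eta g_{0,i}S_K(r_i)$. Second, it takes the intermediate to be the pure diagonal geometric point $[\theta_K^{\mathrm{diag}}-\theta_0]_i=-\eta g_{0,i}S_K(\bar r_i)$ on \emph{all} coordinates. Then only the polynomial $S_K$ appears, with Lipschitz constant $C_{K,R}=\sum_{n=1}^{K-1}nR^{n-1}$ on $[-R,R]$ (Lemma~\ref{lem:empirical_ratio_error}).

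The same choice misplaces the inactive coordinates. Write $A=I-\eta H_0$ and $D=I-\eta\,\mathrm{diag}(H_0)$. With your definitions, $\theta_K^{\mathrm{diag}}$ and $\theta_K^{\mathrm{emp}}$ agree on $\mathcal S$. The missing steps $2,\dots,K-1$ there therefore sit in $\theta_K^{\mathrm{quad}}-\theta_K^{\mathrm{diag}}$, which your $A^n-D^n$ telescoping does not cover. Bounding $\eta\sum_{n\ge2}[A^ng_0]_i$ directly produces terms $[H_0^kg_0]_i$ with $k\ge2$, hidden in your ``$\dots$''. These are not controlled by $\|g_{0,\mathcal S}\|_1$ or $\|(H_0g_0)_{\mathcal S}\|_1$: one can have $g_{0,i}=[H_0g_0]_i=0$ but $[H_0^2g_0]_i\neq0$. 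In the paper's decomposition, the $\mathcal S$-difference against the diagonal point is exactly $\eta g_{0,i}(S_K(\bar r_i)-2)+\eta^2[H_0g_0]_i$. The coupling terms $[(A^n-D^n)g_0]_{\mathcal S}$ are absorbed by the telescoping bound, which is taken over the whole vector.

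The ``main obstacle'' you flag is not one. Since $\rho_\star\ge1$, we have $\rho_\star^{n-1}\le\rho_\star^{K-2}$ for $1\le n\le K-1$. So the naive sum $\eta\sum_{n<K}n\eta\|H_0^{\mathrm{off}}\|\|g_0\|\rho_\star^{n-1}$ is already at most $\frac{K(K-1)}{2}\eta^2\|H_0^{\mathrm{off}}\|\|g_0\|\rho_\star^{K-2}$, with no cancellation of the first two steps. Also, $\|D\|=\max_i|A_{ii}|\le\|A\|\le\rho_\star$ holds automatically, without appealing to $R$.

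Your $E_{\mathrm{nonquad}}$ argument is a valid alternative to the paper's. You use a one-step Taylor expansion at each iterate plus a Hessian-Lipschitz comparison to $H_0$. The paper instead expands about $\theta_0$ and unrolls $\xi_{n+1}=A\xi_n-\eta e_n$.
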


This bound gives simple checks for whether GXPO is operating in its intended local regime. The extrapolated displacement should have small error, the error may grow with $K$ but should remain controlled, interpolation should make $\tilde{\theta}$ closer than the full extrapolated point $\theta_K$, inactive-coordinate fallback should be limited, and the corrective gradient should remain aligned with the initial gradient. Proposition~\ref{prop:descent_direction} in Appendix~\ref{app:descent_direction} gives the local-quadratic alignment check for the repositioning step. Table~\ref{tab:error_diagnostics} and Table~\ref{tab:error_mechanism_diagnostics} show this pattern: median displacement errors stay around $10^{-10}$--$10^{-9}$, $\tilde{\theta}$ is consistently closer than $\theta_K$, and $\cos(g_0,g_{\mathrm{slow}})$ stays near $0.97$. These measurements support the local geometric approximation, but Theorem~\ref{thm:error_bound_main} remains a conservative surrogate bound rather than a numerical prediction of the exact training trajectory.

\section{Experiments \& Results}
\label{sec:experiments}

\subsection{Experimental Settings}
\label{subsec:exp_settings}

\paragraph{Models and data.}
We evaluate GRPO-family reasoning RL on Qwen2.5 and Llama3.2 instruction models \citep{qwen2024qwen25, dubey2024llama3, meta2024llama32}. Training uses the Hendrycks MATH Level 3–5 split~\cite{hendrycks2021math}. Evaluation is conducted on Math-500~\cite{hendrycks2021math}, AMC23~\cite{mathaiamc23}, GSM8K~\cite{cobbe2021gsm8k}, Minerva Math~\cite{lewkowycz2022minerva}, and OlympiadBench~\cite{he2024olympiadbench}.

\paragraph{Training protocol.}
We use the same prompts, rewards, decoding setup, KL penalty, and GRPO loss for all methods, changing only the policy-update rule. We train Qwen2.5-7B with LoRA attention projections $(q,k,v,o)$ using lora rank $128$ and lora alpha $256$ \citep{hu2021lora}. All runs use bf16 precision, learning rate $10^{-7}$, gradient clipping at $1.0$, PPO clipping $\epsilon=0.2$, and KL coefficient $\beta=0.001$. Each batch contains $128$ questions with $5$ responses per question, a maximum of $3072$ generated tokens, and a context window of $4096$. We train for $300$ steps and evaluate with $16$ responses per prompt on Math500, GSM8K, AMC23, MinervaMath, and OlympiadBench. For GXPO, we use $\alpha_0=0.5$, $\delta=10^{-8}$, $\tau=0.5$ and trajectory-aware shutoff. For SFPO, we use $\alpha_0=0.5$, $\tau=2.0$. All experiments were run on 4 NVIDIA H100 GPUs, and wall-clock efficiency was measured under this setup.

\paragraph{Methods and budgets.}
We compare GRPO, SFPO, and GXPO under the same training and evaluation budget. SFPO and GXPO are run with $K \in \{3,5,10\}$, using $\alpha_0=0.5$ unless varied in ablations. GRPO uses one backward pass per step, SFPO uses $K+1$ backward passes, and GXPO uses three backward passes during its active extrapolation phase before falling back to one pass after adaptive shutoff. We report accuracy alongside step efficiency, wall-clock efficiency, and backward-pass efficiency.

\paragraph{Evaluation metric.}
Following the pass@k evaluation convention~\citep{chen2021evaluating} and recent reasoning-RL evaluations that report pass@1 from multiple non-greedy samples~\citep{deepseekai2025r1,zuo2025ttrl}, each benchmark is evaluated multiple times with rollout temperature being 1, and we
report the average Pass@1 accuracy by default. 

\subsection{Main Results}
\label{subsec:main_results}

\subsubsection{Math Reasoning Benchmarks}
\label{subsubsec:benchmark_results}

\begin{table*}[h]
\centering
\caption{Performance on math reasoning benchmarks after training on the Hendrycks MATH dataset. SFPO and GXPO use the same reposition strength, $\alpha=0.5$, across all reported settings.}
\label{tab:sampled_pass1_accuracy}
\renewcommand{\arraystretch}{1.2}

\definecolor{gaingreen}{RGB}{59,109,17}

\resizebox{\textwidth}{!}{%
\begin{tabular}{ll c c c c c c c c}
\toprule
\textbf{Model} & \textbf{Method} & $\boldsymbol{k}$ & \textbf{BP}
& \textbf{Math-500} & \textbf{AMC23} & \textbf{GSM8k}
& \textbf{Minerva} & \textbf{Olympiad} & \textbf{Avg.} \\
\midrule

\multirow{7}{*}{Qwen2.5-1.5B}
& GRPO & -- & 1 & 24.18 & 2.99 & 43.53 & 5.49 & 6.10 & 16.46 \\
\cmidrule(lr){2-10}
& \multirow{3}{*}{SFPO}
& 3  & 4  & 25.71 & 4.43 & 45.29 & 5.91 & 6.78 & 17.62 \\
&  & 5  & 6  & 29.98 & 4.43 & 50.43 & 6.59 & 7.45 & 19.78 \\
&  & 10 & 11 & 30.00 & 5.08 & 52.51 & 6.17 & 7.15 & 20.18 \\
\cmidrule(lr){2-10}
& \multirow{3}{*}{GXPO}
& 3  & 3  & 30.75 & \textbf{\textcolor{gaingreen}{6.38}} & 51.49 & 6.53 & 7.64 & 20.56 \\
&  & 5  & 3  & 31.80 & 3.26 & 52.46 & \textbf{\textcolor{gaingreen}{7.50}} & 7.60 & 20.52 \\
&  & 10 & 3
& \textbf{\textcolor{gaingreen}{32.31}}
& 5.08
& \textbf{\textcolor{gaingreen}{54.45}}
& 7.29
& \textbf{\textcolor{gaingreen}{8.19}}
& \textbf{\textcolor{gaingreen}{21.46}} \\

\midrule

\multirow{7}{*}{Qwen2.5-3B}
& GRPO & -- & 1 & 56.74 & 30.13 & 78.73 & 16.91 & 14.91 & 39.48 \\
\cmidrule(lr){2-10}
& \multirow{3}{*}{SFPO}
& 3  & 4  & 56.44 & 31.07 & 78.94 & \textbf{\textcolor{gaingreen}{17.52}} & 15.25 & 39.84 \\
&  & 5  & 6  & 57.02 & 30.94 & 79.02 & 17.23 & 15.42 & 39.93 \\
&  & 10 & 11 & 57.70 & 30.89 & 79.46 & 16.95 & 15.29 & 40.06 \\
\cmidrule(lr){2-10}
& \multirow{3}{*}{GXPO}
& 3  & 3  & 57.59 & 30.65 & 79.47 & 17.35 & 15.42 & 40.10 \\
&  & 5  & 3  & 58.34 & 30.89 & 80.21 & 17.27 & 15.39 & 40.42 \\
&  & 10 & 3
& \textbf{\textcolor{gaingreen}{59.36}}
& \textbf{\textcolor{gaingreen}{32.73}}
& \textbf{\textcolor{gaingreen}{80.98}}
& 17.39
& \textbf{\textcolor{gaingreen}{15.64}}
& \textbf{\textcolor{gaingreen}{41.22}} \\

\midrule

\multirow{5}{*}{Qwen2.5-7B}
& GRPO & -- & 1 & 66.56 & 48.78 & 88.43 & 20.91 & 19.48 & 48.83 \\
\cmidrule(lr){2-10}
& \multirow{2}{*}{SFPO}
& 3 & 4 & 66.65 & 49.45 & 88.52 & 20.64 & 19.11 & 48.87 \\
&  & 5 & 6 & 71.75 & 49.87 & 88.49 & 23.43 & 20.57 & 50.82 \\
\cmidrule(lr){2-10}
& \multirow{2}{*}{GXPO}
& 3 & 3
& 71.70
& 47.60
& 88.60
& \textbf{\textcolor{gaingreen}{23.66}}
& 20.76
& 50.46 \\
& & 5 & 3
& \textbf{\textcolor{gaingreen}{71.80}}
& \textbf{\textcolor{gaingreen}{50.00}}
& \textbf{\textcolor{gaingreen}{88.63}}
& 23.58
& \textbf{\textcolor{gaingreen}{20.79}}
& \textbf{\textcolor{gaingreen}{50.96}} \\

\midrule

\multirow{5}{*}{Llama3.2-3B}
& GRPO & -- & 1 & 33.46 & 10.57 & 67.04 & 10.98 & 5.76 & 25.56 \\
\cmidrule(lr){2-10}
& \multirow{2}{*}{SFPO}
& 3 & 4 & 33.51 & 11.28 & 66.99 & 11.12 & 5.98 & 25.77 \\
&  & 5 & 6 & 34.15 & 11.74 & 67.44 & 11.44 & 5.91 & 26.14 \\
\cmidrule(lr){2-10}
& \multirow{2}{*}{GXPO}
& 3 & 3 & 34.85 & 11.09 & 68.14 & 11.17 & 6.19 & 26.29 \\
& & 5 & 3
& \textbf{\textcolor{gaingreen}{36.09}}
& \textbf{\textcolor{gaingreen}{12.92}}
& \textbf{\textcolor{gaingreen}{68.68}}
& \textbf{\textcolor{gaingreen}{12.05}}
& \textbf{\textcolor{gaingreen}{6.31}}
& \textbf{\textcolor{gaingreen}{27.21}} \\

\bottomrule
\end{tabular}%
}
\end{table*}

 As shown in Table~\ref{tab:sampled_pass1_accuracy}, GXPO consistently outperforms GRPO and SFPO across all four model families from 1.5B to 7B parameters. On Qwen2.5-1.5B and Qwen2.5-3B, GXPO with $k \in {3,5,10}$ achieves the highest average accuracy by a clear margin over both baselines. On Qwen2.5-7B, the best performance is obtained at $k=5$, and on Llama3.2-3B, GXPO with $k=5$ performs best across benchmarks. All gains are achieved without increasing the active-phase backward-pass cost with $k$.

\subsubsection{Training Dynamics}
\label{subsubsec:training_dynamics}

GXPO improves not only final accuracy but also the efficiency of reaching strong policies. On Llama3.2-3B, GXPO $k{=}10$ reaches GRPO's peak-accuracy threshold in 60 steps and 180 backward passes, compared with 240 steps and 240 backward passes for GRPO, while also attaining the highest peak accuracy among the compared methods (Table~\ref{tab:convergence_speedup}). The wall-clock and hyperparameter views show the same trend: GXPO improves steadily during its active phase across $\alpha$ and $k$ (Fig.~\ref{fig:alpha_steps}) and reaches higher Pass@16 in less time than both GRPO and SFPO (Fig.~\ref{fig:training_efficiency_plots}), while keeping the active-phase cost fixed at three backward passes regardless of $k$. This fixed-cost lookahead gives GXPO a better accuracy--compute trade-off at larger $k$, and the KL/clip diagnostics indicate that repositioning does not substantially destabilize the policy update (Table~\ref{tab:kl_clip_diagnostics}; Appendices~\ref{app:geo_diagnostics} and~\ref{app:kl_clip_diagnostics}).

\begin{table*}[h]
\centering
\caption{Convergence efficiency comparison on Hendrycks MATH. Speedup is relative to GRPO.}
\label{tab:convergence_speedup}
\setlength{\tabcolsep}{4pt}
\renewcommand{\arraystretch}{1.2}
\small

\definecolor{gaingreen}{RGB}{59,109,17}

\resizebox{\textwidth}{!}{
\begin{tabular}{ll c c c c c c c c}
\toprule
\textbf{Model} & \textbf{Method} & $\boldsymbol{k}$ & \textbf{Peak Acc.} 
  & \shortstack{\textbf{Steps to} \\ \textbf{match GRPO}} & \textbf{Step}$\uparrow$
  & \shortstack{\textbf{Hours to} \\ \textbf{match GRPO}} & \textbf{Time}$\uparrow$
  & \shortstack{\textbf{BPs to} \\ \textbf{match GRPO}} & \textbf{BP}$\uparrow$ \\
\midrule

\multirow{6}{*}{Llama3.2-3B}
  & GRPO
  & --
  & 0.3410
  & 240
  & 1.00$\times$
  & 14.14\,h
  & 1.00$\times$
  & 240
  & 1.00$\times$ \\

\cmidrule(lr){2-10}

& \multirow{2}{*}{SFPO}
  & 3  & 0.3465 & 80  & 3.00$\times$ & 9.07\,h  & 1.56$\times$ & 320 & 0.75$\times$ \\
&  & 10 & 0.3525 & 80  & 3.00$\times$ & 18.68\,h & 0.76$\times$  & 870 & 0.28$\times$ \\

\cmidrule(lr){2-10}

& \multirow{2}{*}{GXPO}
  & 3  & 0.3619 & 65 & 3.69$\times$ & 6.72\,h & 2.10$\times$ & 195 & 1.23$\times$ \\

&  & 10 
  & \textbf{\textcolor{gaingreen}{0.3670}} 
  & \textbf{\textcolor{gaingreen}{60}} 
  & \textbf{\textcolor{gaingreen}{4.00$\times$}} 
  & \textbf{\textcolor{gaingreen}{6.06\,h}} 
  & \textbf{\textcolor{gaingreen}{2.33$\times$}} 
  & \textbf{\textcolor{gaingreen}{180}} 
  & \textbf{\textcolor{gaingreen}{1.33$\times$}} \\

\bottomrule
\end{tabular}
}
\end{table*}

\begin{figure}[h]
    \centering
    \safeincludegraphics[width=\linewidth]{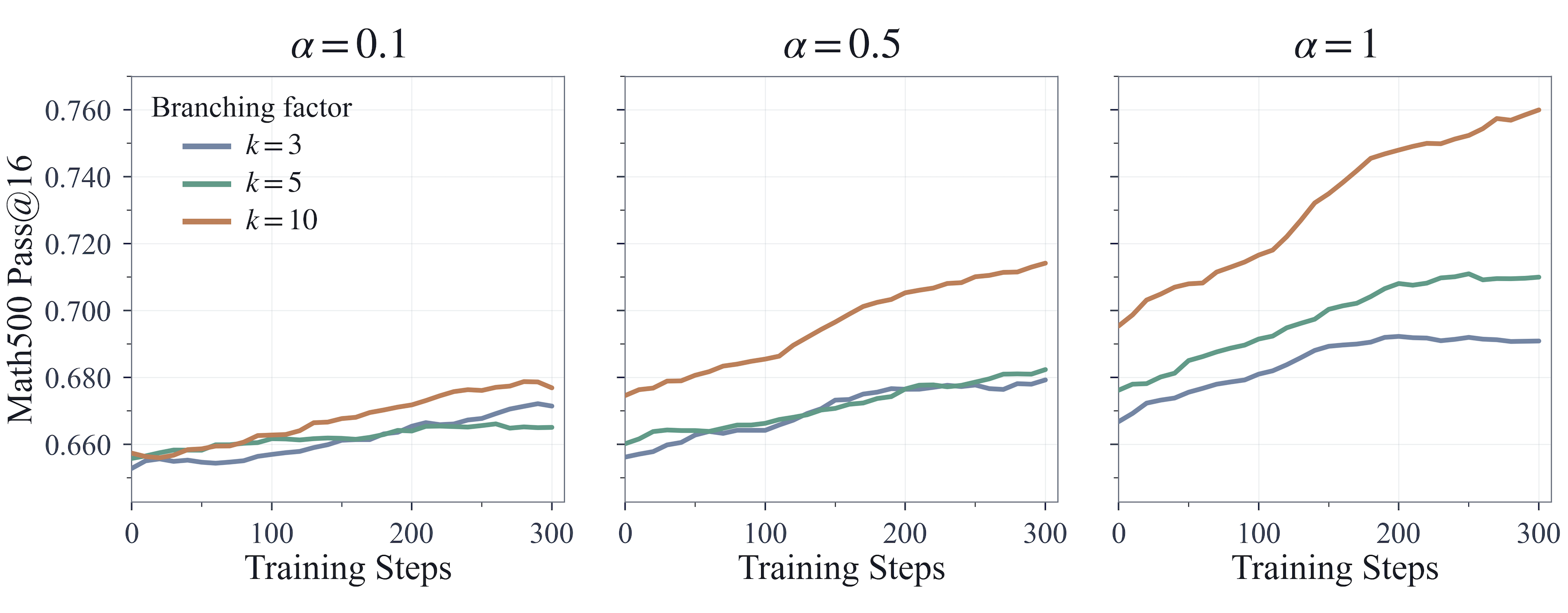}
    \caption{
    Pass@16 accuracy versus training steps across $\alpha \in \{0.1, 0.5, 1.0\}$ and $k \in \{3,5,10\}$. Solid lines denote smoothed curves.
    }
    \label{fig:alpha_steps}
\end{figure}

\begin{figure}[htbp]
    \centering
    \begin{subfigure}[b]{0.49\linewidth}
        \centering
        \safeincludegraphics[width=\linewidth]{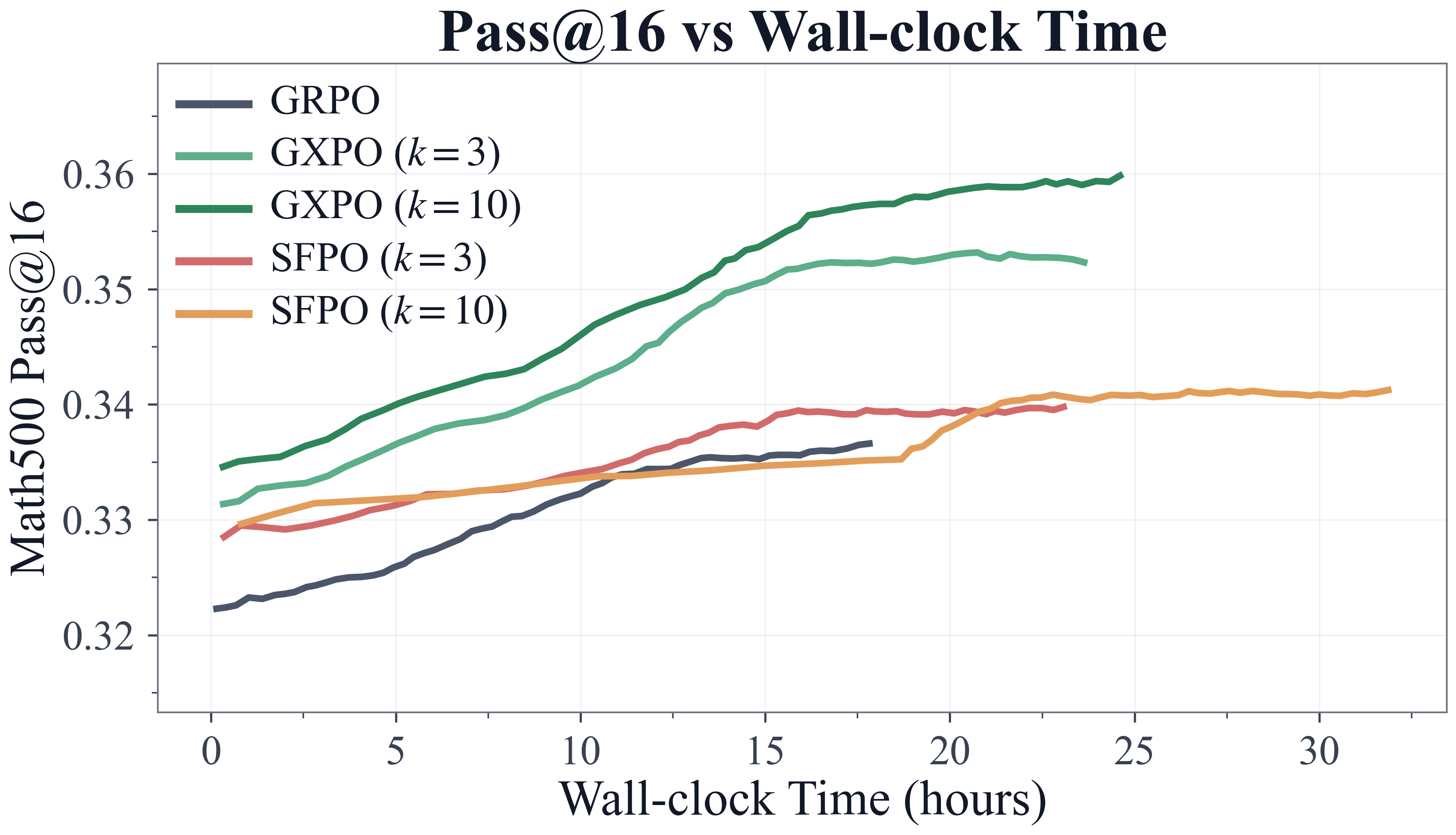}
        \label{fig:plot2}
    \end{subfigure}
    \hfill
    \begin{subfigure}[b]{0.49\linewidth}
        \centering
        \safeincludegraphics[width=\linewidth]{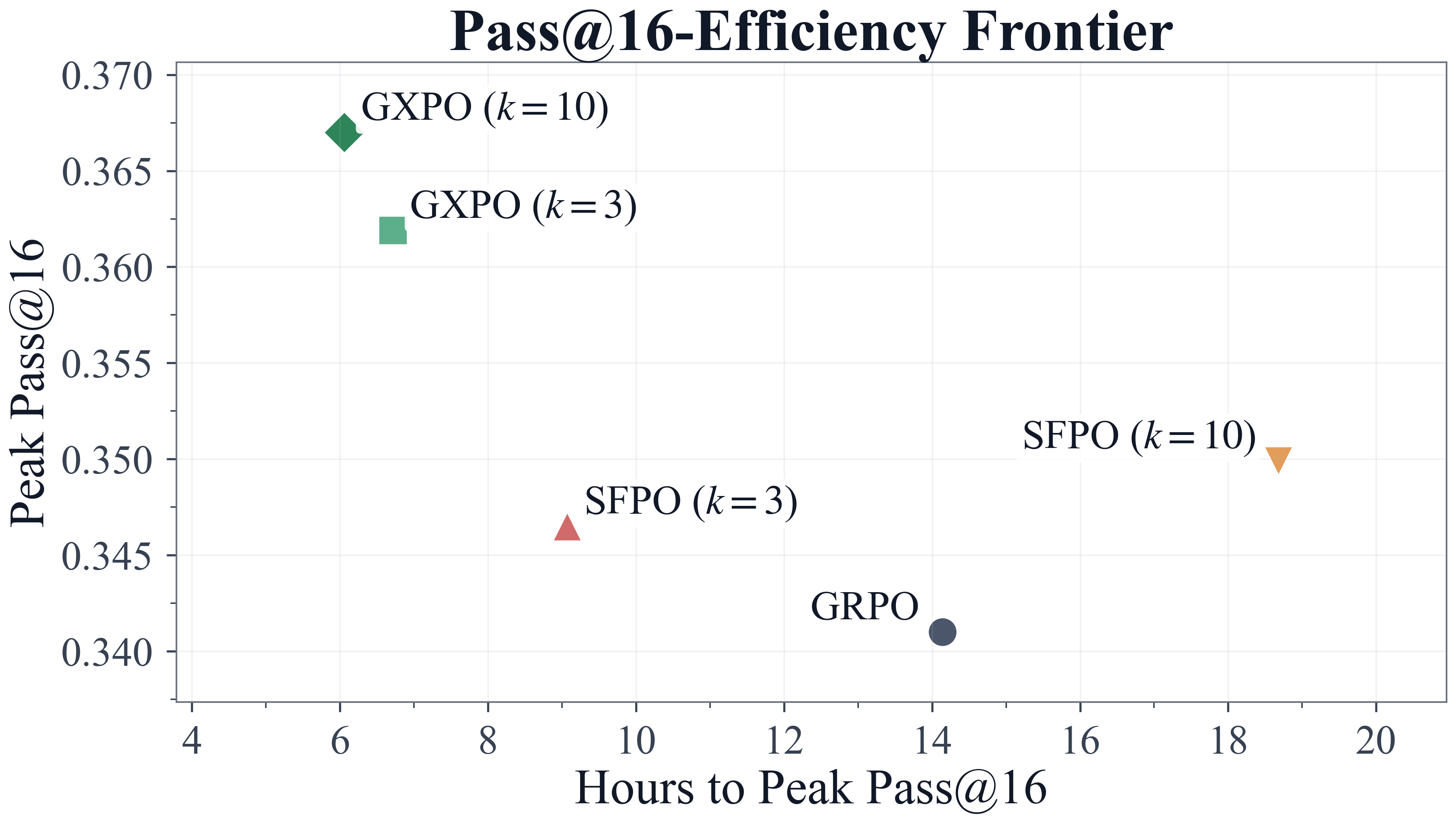}
        \label{fig:plot4}
    \end{subfigure}
    \caption{Training efficiency across GRPO, GXPO, and SFPO, with results reported up to 300 training steps. Left: Pass@16 vs.\ wall-clock time, where GXPO ($k{=}10$) leads throughout. Right: peak Pass@16 vs.\ time-to-peak, where GXPO achieves a better efficiency frontier.}
    \label{fig:training_efficiency_plots}
\end{figure}

\paragraph{Ablation Studies.}
On Qwen2.5-1.5B, sweeping $\alpha \in \{0.1, 0.5, 1.0\}$ and $k \in \{3, 5, 10\}$ reveals that larger values of both consistently improve Math-500 pass@16, with the advantage persisting under backward-pass normalization, confirming that gains reflect optimization quality rather than added compute. Varying the stability threshold $\tau$ shows consistent improvements across all efficiency views (Tables~\ref{tab:iso_bp_mean_llama} and~\ref{tab:iso_wc_mean_llama}). On Llama3.2-3B, iso-backward-pass and iso-wall-clock comparisons confirm that GXPO's gains over GRPO and SFPO hold under compute-controlled conditions. Full results and surrogate displacement diagnostics supporting the local geometric approximation are provided in Appendix~\ref{app:llama_ablation_tables}.

\section{Conclusion}
GXPO is a GRPO-compatible policy-update rule that approximates local $K$-step lookahead using two probe gradients and one corrective gradient, while keeping the active-phase backward-pass count fixed at three. By extrapolating short-horizon gradient changes, GXPO captures useful lookahead information without requiring the backward-pass cost to grow with $K$. Across Qwen2.5 and Llama3.2 math-reasoning experiments, GXPO improves sampled pass@1 over GRPO and matches or exceeds SFPO with fewer backward passes and faster time-to-target performance. Since GXPO reuses the same rollouts, rewards, advantages, and GRPO loss, it can be added to existing RLVR pipelines with minimal changes. Overall, GXPO offers a practical middle ground between efficient single-step GRPO and more expensive multi-step lookahead methods.

\section{Limitations}
Our analysis is a surrogate analysis under clean GD-style assumptions, while the implementation uses AdamW with stateful moments and adaptive preconditioning. Although diagnostics support the intended local regime, a full theory for stateful optimizers remains future work. Our experiments also focus on math RLVR with verifiable rewards, so broader tasks and larger-scale settings should be tested.

\section{Broader Impact}
GXPO aims to improve both compute and time efficiency in RLVR training by reducing the backward-pass cost of lookahead-style updates. This can lower training cost and make reasoning-RL research more accessible. However, more efficient training may also accelerate stronger reasoning models, so models trained with GXPO should undergo standard safety, misuse, and reliability evaluations before deployment.

\clearpage
\appendix

\clearpage
\section{Proofs}
\label{app:proofs}

This appendix proves the claims used in \S\ref{sec:method}. The first group of results justifies the local geometric extrapolation used by GXPO; the second group bounds the gap between that surrogate and the true gradient-descent trajectory; the final group gives an idealized backward-pass budget check under a global diagonal quadratic model. Throughout the appendix, the analysis uses plain gradient descent to isolate the extrapolation mechanism from optimizer-state effects.

\subsection{Proof of Theorem~\ref{thm:gn_operator}}
\label{app:proof_thm_gn}

\begin{proof}
Let $A=I-\eta H_0$. We prove the claim by induction.

\emph{Step 1: base cases.}
For $n=0$ and $n=1$, respectively, $g_0=A^0g_0$ and, by \eqref{eq:g1}, $g_1=(I-\eta H_0)g_0=Ag_0$.
Thus the formula holds for the first two iterates.

\emph{Step 2: induction assumption.}
Assume that the formula holds up to step $n$, i.e.,
\[
    g_k=A^k g_0 \qquad \text{for all } 0\le k\le n.
\]
We show that it also holds at step $n+1$.

\emph{Step 3: write the next gradient using the local quadratic model.}
Under Assumption~\ref{assum:quadratic}, the local model and the plain-GD displacement give
\[
    g_{n+1}=g_0+H_0(\theta_{n+1}-\theta_0),
    \qquad
    \theta_{n+1}-\theta_0=-\eta\sum_{k=0}^{n}g_k,
\]
so
\[
    g_{n+1}=g_0-\eta H_0\sum_{k=0}^{n}g_k
    =g_0-\eta H_0\sum_{k=0}^{n}A^kg_0,
\]
where the last equality uses the induction assumption.

\emph{Step 4: simplify the geometric sum.}
Since $A=I-\eta H_0$, we have $I-A=\eta H_0$. Therefore,
\[
    (I-A)\sum_{k=0}^{n} A^k
    = \sum_{k=0}^{n}(A^k - A^{k+1})
    = A^0 - A^{n+1}
    = I - A^{n+1},
\]
where the middle terms cancel telescopically. This identity does not require
$A$ to be invertible. Multiplying by $g_0$ gives
\[
    \eta H_0 \sum_{k=0}^{n} A^k g_0
    =(I-A)\sum_{k=0}^{n} A^k g_0
    =(I-A^{n+1})g_0.
\]

\emph{Step 5: conclude the induction.}
Substituting the last identity into the expression for $g_{n+1}$ yields
\[
    g_{n+1}=g_0-(I-A^{n+1})g_0=A^{n+1}g_0.
\]
Thus the formula holds at step $n+1$, completing the induction.
\end{proof}

\subsection{Gradient alignment under the local quadratic model}
\label{app:descent_direction}

\begin{proposition}[Gradient alignment under the local quadratic model]
\label{prop:descent_direction}
Under Assumption~\ref{assum:quadratic}, let $\tilde\theta$ be defined by~\eqref{eq:blend_step}. If
\[
    \alpha\,\|H_0\|\,\|\theta_K - \theta_0\| < \|g_0\|,
\]
then under the local quadratic model the \emph{modelled} corrective gradient satisfies
$\langle g_0,\, g_0 + H_0(\tilde\theta - \theta_0)\rangle > 0$.
This only concerns the gradient predicted by the local quadratic model. Algorithm~\ref{alg:GXPO} uses the true gradient $g_{\mathrm{slow}} = \nabla\mathcal{L}(\tilde\theta)$, so the proposition should be read as a local geometric sanity check for the repositioning step rather than as a descent or convergence guarantee.
\end{proposition}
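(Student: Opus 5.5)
The plan is a direct expansion of the inner product followed by a Cauchy--Schwarz estimate; no earlier result beyond the definition~\eqref{eq:blend_step} is needed. By~\eqref{eq:blend_step}, $\tilde\theta-\theta_0=\alpha(\theta_K-\theta_0)$. Under Assumption~\ref{assum:quadratic}, the modelled corrective gradient is therefore $g_0+\alpha H_0(\theta_K-\theta_0)$. Expanding the inner product gives
\[
    \langle g_0,\, g_0 + H_0(\tilde\theta-\theta_0)\rangle
    = \|g_0\|^2 + \alpha\,\langle g_0,\, H_0(\theta_K-\theta_0)\rangle .
\]

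The second step bounds the cross term from below. By Cauchy--Schwarz and the definition of the operator norm,
\[
    \alpha\,\langle g_0,\, H_0(\theta_K-\theta_0)\rangle \ge -\alpha\,\|g_0\|\,\|H_0\|\,\|\theta_K-\theta_0\|,
\]
using $\alpha\ge 0$. Combining the two displays,
\[
    \langle g_0,\, g_0 + H_0(\tilde\theta-\theta_0)\rangle \ge \|g_0\|\bigl(\|g_0\| - \alpha\|H_0\|\|\theta_K-\theta_0\|\bigr).
\]

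The third step uses the hypothesis. It makes the bracket strictly positive, and it also forces $\|g_0\|>0$, since the left-hand side of the hypothesis is nonnegative. The product is therefore strictly positive, which is the claim.

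No step is a genuine obstacle. The only points needing care are the two implicit facts used above: that $\alpha\ge0$ (so the Cauchy--Schwarz bound applies with the correct sign) and that $g_0\neq 0$ (so the final product is strictly positive). Both follow from $\alpha\in[0,1]$ and from the strict hypothesis. No structure of $\theta_K$, such as the geometric scaling~\eqref{eq:scale_factor}, is needed, because the statement concerns only the modelled gradient at an arbitrary point with the stated displacement bound.
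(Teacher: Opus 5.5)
Your proposal is correct and follows essentially the same route as the paper: expand $\langle g_0, g_0 + \alpha H_0(\theta_K-\theta_0)\rangle$, bound the cross term by Cauchy--Schwarz and the operator norm to get $\|g_0\|\bigl(\|g_0\| - \alpha\|H_0\|\|\theta_K-\theta_0\|\bigr)$, and conclude from the hypothesis. Your explicit remarks that $\alpha\ge 0$ and that the strict hypothesis forces $g_0\neq 0$ spell out points the paper's proof uses only implicitly.
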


\begin{proof}
By definition,
\[
    \tilde\theta-\theta_0=\alpha(\theta_K-\theta_0),
    \qquad
    \langle g_0,\, g_0 + H_0(\tilde\theta - \theta_0)\rangle
    = \|g_0\|^2 + \alpha\langle g_0,\, H_0(\theta_K - \theta_0)\rangle.
\]
By Cauchy--Schwarz and submultiplicativity of the operator norm,
\[
    |\langle g_0, H_0(\theta_K-\theta_0)\rangle|
    \leq \|g_0\|\cdot\|H_0(\theta_K-\theta_0)\|
    \leq \|g_0\|\cdot\|H_0\|\cdot\|\theta_K-\theta_0\|.
\]
Hence
\[
    \langle g_0,\, g_0 + H_0(\tilde\theta - \theta_0)\rangle
    \geq
    \|g_0\|\bigl(\|g_0\| - \alpha\,\|H_0\|\,\|\theta_K-\theta_0\|\bigr),
\]
which is strictly positive under the stated condition.
\end{proof}

\begin{remark}[Connection to diagnostics]
Proposition~\ref{prop:descent_direction} gives a sufficient local-quadratic condition under which the modelled corrective gradient remains positively aligned with the initial gradient. We do not directly verify this condition because it depends on the local Hessian norm. Instead, we check its observable consequence by measuring the cosine similarity between the initial gradient and the true corrective gradient, $\cos(g_0,g_{\mathrm{slow}})$, during the active phase. The diagnostic tables show that this cosine remains close to $0.97$ across tested lookahead depths, supporting the intended local-alignment regime after repositioning.
\end{remark}

\begin{remark}[Interpretation of the alignment condition]
\label{rem:alignment_condition}
The condition in Proposition~\ref{prop:descent_direction} says that the repositioning distance should be small enough that the local quadratic correction does not overturn the original gradient direction. Under the diagonal quadratic GD surrogate, Proposition~\ref{prop:displacement} gives
\[
    |[\theta_K-\theta_0]_i| = \eta |g_{0,i}|\,|S_K(r_i)|.
\]
Thus, if $R=\max_i |r_i|$, then
\[
    \|\theta_K-\theta_0\|
    \leq
    \eta \|g_0\| S_K(R).
\]
A sufficient condition for modelled alignment is therefore
\[
    \alpha \eta \|H_0\| S_K(R) < 1.
\]
This condition is only a conservative local check. It does not guarantee alignment of the true gradient
$g_{\mathrm{slow}}=\nabla \mathcal{L}(\tilde{\theta})$, nor does it model the full optimizer dynamics used in implementation. In practice, GXPO monitors the corrective gradient directly through the adaptive shutoff rule.
\end{remark}

\subsection{Derivation of the Geometric Decay Approximation}
\label{app:geometric_decay}

The analysis throughout this appendix is conducted under plain gradient descent,
\[
    \theta_{n+1} = \theta_n - \eta\,g_n,
\]
as a surrogate for the practical first-order-optimizer implementation in Algorithm~\ref{alg:GXPO}. This surrogate isolates the extrapolation mechanism from optimizer-specific state, such as momentum and adaptive preconditioning. The results below therefore describe the GD surrogate only; they should be read as an explanation of the extrapolation geometry rather than as a direct model of the full optimizer dynamics used in implementation.

By Theorem~\ref{thm:gn_operator}, the exact gradient evolution under Assumption~\ref{assum:quadratic} is $g_n = (I - \eta H_0)^n g_0$. Direct evaluation of this expression requires $O(d^2)$ matrix arithmetic, which is infeasible at large model scales. The remainder of this section derives the coordinate-wise geometric approximation that renders the computation tractable.

\paragraph{Diagonal Hessian case.}
Suppose first that $H_0$ is diagonal:
\[
    H_0 = \mathrm{diag}(h_{11}, h_{22}, \dots, h_{dd}).
\]
Then
\[
    I - \eta H_0
    = \mathrm{diag}(1-\eta h_{11},\, 1-\eta h_{22},\, \dots,\, 1-\eta h_{dd}),
\]
and therefore
\[
    (I-\eta H_0)^n
    =
    \mathrm{diag}\!\big((1-\eta h_{11})^n,\dots,(1-\eta h_{dd})^n\big).
\]
Multiplying by $g_0$ gives, for every coordinate $i$,
\begin{equation}
    g_{n,i} = (1-\eta h_{ii})^n g_{0,i}.
    \label{eq:diag_exact_coordinate}
\end{equation}

\paragraph{Diagonal surrogate rate.}
Equation~\eqref{eq:diag_exact_coordinate} motivates the definition of the \emph{diagonal surrogate rate}
\begin{equation}
    \bar r_i \;\equiv\; 1 - \eta H_{0,ii}.
    \label{eq:diag_rate}
\end{equation}
When $H_0$ is diagonal, \eqref{eq:diag_exact_coordinate} takes the form
\[
    g_{n,i} = \bar r_i^n g_{0,i},
\]
and, in the GD surrogate, the geometric decay model is exact.

\begin{lemma}[Exactness of geometric decay for diagonal $H_0$ in the GD surrogate]
\label{lem:diag_exact}
If $H_0$ is diagonal, then for every $n\geq 0$ and every coordinate $i$,
\[
    g_{n,i} = \bar r_i^n g_{0,i},
    \qquad
    \bar r_i = 1-\eta H_{0,ii}.
\]
\end{lemma}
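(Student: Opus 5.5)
The lemma is an immediate specialization of Theorem~\ref{thm:gn_operator}, so the plan is to invoke that theorem and then exploit the diagonal structure of $H_0$. Under Assumption~\ref{assum:quadratic} and the plain-GD recursion $\theta_{n+1}=\theta_n-\eta g_n$, Theorem~\ref{thm:gn_operator} gives $g_n=(I-\eta H_0)^n g_0$ for every $n\ge 0$. The only work is to evaluate this matrix power coordinatewise when $H_0$ is diagonal.

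First, observe that if $H_0=\mathrm{diag}(H_{0,11},\dots,H_{0,dd})$, then $A:=I-\eta H_0=\mathrm{diag}(\bar r_1,\dots,\bar r_d)$ with $\bar r_i=1-\eta H_{0,ii}$ as in \eqref{eq:diag_rate}. Next, show by a one-line induction on $n$ that $A^n=\mathrm{diag}(\bar r_1^n,\dots,\bar r_d^n)$. The case $n=0$ is the identity (with the convention $\bar r_i^0=1$ even when $\bar r_i=0$). The inductive step uses the fact that a product of diagonal matrices is diagonal with entrywise products. Finally, read off the $i$-th entry: $[A^n g_0]_i=\sum_j (A^n)_{ij}g_{0,j}=\bar r_i^n g_{0,i}$, since all off-diagonal entries vanish. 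This is exactly \eqref{eq:diag_exact_coordinate}, restated with the notation $\bar r_i$.

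There is no genuine obstacle here. The one point to state carefully is the scope of the claim: exactness holds within the GD surrogate and under Assumption~\ref{assum:quadratic}, and it is inherited entirely from Theorem~\ref{thm:gn_operator}. I would also remark that for a global quadratic $\mathcal{L}(\theta)=\tfrac12\theta^\top H_0\theta$ the assumption holds everywhere, so the identity is exact along the whole trajectory. That is the fact Corollary~\ref{cor:full_extrapolation_rate} later needs.
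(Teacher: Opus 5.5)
Your proposal is correct and matches the paper's proof: both invoke Theorem~\ref{thm:gn_operator} to get $g_n=(I-\eta H_0)^n g_0$, observe that $(I-\eta H_0)^n$ is diagonal with entries $(1-\eta H_{0,ii})^n$, and read off the $i$-th coordinate. Your explicit induction on powers of a diagonal matrix and the $\bar r_i^0=1$ convention are harmless refinements of the same argument.
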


\begin{proof}
By Theorem~\ref{thm:gn_operator}, $g_n = (I-\eta H_0)^n g_0$.
When $H_0 = \mathrm{diag}(h_{11},\dots,h_{dd})$, the matrix $(I-\eta H_0)$ is also diagonal with $i$-th entry $1-\eta h_{ii}$, so $(I-\eta H_0)^n$ is diagonal with $i$-th entry $(1-\eta h_{ii})^n$.
Extracting the $i$-th coordinate gives
\[
    g_{n,i} = (1-\eta h_{ii})^n g_{0,i} = \bar r_i^n g_{0,i},
\]
where the last equality uses $\bar r_i = 1-\eta H_{0,ii}$ and $h_{ii} = H_{0,ii}$.
\end{proof}

\paragraph{Empirical retention ratio.}
The diagonal surrogate rate $\bar r_i$ is not accessible directly. GXPO therefore uses the empirical retention ratio measured from the first two computed gradients:
\begin{equation}
    r_i \;\equiv\; \frac{g_{1,i}}{g_{0,i}},
    \qquad \text{for coordinates with } g_{0,i}\neq 0.
    \label{eq:empirical_ratio_appendix}
\end{equation}
In the plain-GD surrogate, this quantity coincides with $\bar r_i$ when $H_0$ is diagonal, but differs in general. In the implemented method, $r_i$ is treated as an empirical coordinate-wise retention ratio measured along the realized fast optimizer trajectory.

\begin{lemma}[Bias of the empirical retention ratio in the GD surrogate]
\label{lem:ratio_bias}
Under the plain-GD surrogate and the local quadratic model, for every coordinate $i$ with $g_{0,i}\neq 0$,
\begin{equation}
    r_i
    =
    \bar r_i
    -
    \eta \sum_{j\neq i} H_{0,ij}\,\frac{g_{0,j}}{g_{0,i}}.
    \label{eq:ratio_bias}
\end{equation}
Equivalently,
\begin{equation}
    r_i - \bar r_i
    =
    -\eta \sum_{j\neq i} H_{0,ij}\,\frac{g_{0,j}}{g_{0,i}}.
    \label{eq:ratio_bias_difference}
\end{equation}
\end{lemma}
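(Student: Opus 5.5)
The plan is a direct one-line expansion, using the $n=1$ case of Theorem~\ref{thm:gn_operator} (equivalently \eqref{eq:g1}). Under the plain-GD surrogate and Assumption~\ref{assum:quadratic}, the first probe gradient is
\[
    g_1=(I-\eta H_0)g_0=g_0-\eta H_0 g_0 .
\]
I will extract the $i$-th coordinate of this vector identity, which gives $g_{1,i}=g_{0,i}-\eta\,[H_0g_0]_i$.

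Next, I will expand the matrix--vector product coordinatewise and separate the diagonal term from the off-diagonal terms:
\[
    [H_0g_0]_i=\sum_{j=1}^d H_{0,ij}g_{0,j}=H_{0,ii}g_{0,i}+\sum_{j\neq i}H_{0,ij}g_{0,j}.
\]
Substituting this gives
\[
    g_{1,i}=(1-\eta H_{0,ii})\,g_{0,i}-\eta\sum_{j\neq i}H_{0,ij}g_{0,j}.
\]
By definition \eqref{eq:diag_rate}, the coefficient in front of $g_{0,i}$ is exactly $\bar r_i$.

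Since $g_{0,i}\neq 0$ by hypothesis, I can divide through by $g_{0,i}$. Using the definition \eqref{eq:empirical_ratio_appendix} of $r_i$, this yields \eqref{eq:ratio_bias}. Subtracting $\bar r_i$ from both sides then gives the equivalent form \eqref{eq:ratio_bias_difference}.

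There is no real obstacle here. The only care points are to note explicitly that the nonvanishing of $g_{0,i}$ is what licenses the division, and that the identity is exact only within the local quadratic surrogate. As a consistency check, when $H_0$ is diagonal the off-diagonal sum vanishes and the identity reduces to Lemma~\ref{lem:diag_exact} at $n=1$.
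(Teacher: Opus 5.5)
Your proposal is correct and follows the paper's proof essentially line for line. You take the $i$-th coordinate of $g_1=(I-\eta H_0)g_0$, split $[H_0g_0]_i$ into its diagonal and off-diagonal parts, identify $1-\eta H_{0,ii}$ with $\bar r_i$, and divide by $g_{0,i}\neq 0$.
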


\begin{proof}
Under the plain-GD surrogate, \eqref{eq:g1} gives
\[
    g_1 = (I-\eta H_0)g_0,
\]
so the $i$-th coordinate is
\[
    g_{1,i}
    =
    g_{0,i} - \eta [H_0 g_0]_i.
\]
Expand the matrix-vector product:
\[
    [H_0 g_0]_i
    =
    \sum_{j=1}^d H_{0,ij} g_{0,j}
    =
    H_{0,ii} g_{0,i} + \sum_{j\neq i} H_{0,ij} g_{0,j}.
\]
Substituting into the expression for $g_{1,i}$ gives
\[
    g_{1,i}
    =
    g_{0,i}
    - \eta H_{0,ii} g_{0,i}
    - \eta \sum_{j\neq i} H_{0,ij} g_{0,j}.
\]
Divide both sides by $g_{0,i}$:
\[
    \frac{g_{1,i}}{g_{0,i}}
    =
    1 - \eta H_{0,ii}
    - \eta \sum_{j\neq i} H_{0,ij}\frac{g_{0,j}}{g_{0,i}}.
\]
Using \eqref{eq:diag_rate} and \eqref{eq:empirical_ratio_appendix},
\[
    r_i = \bar r_i - \eta \sum_{j\neq i} H_{0,ij}\frac{g_{0,j}}{g_{0,i}},
\]
which is \eqref{eq:ratio_bias}. Rearranging gives
\eqref{eq:ratio_bias_difference}.
\end{proof}

\paragraph{Sources of approximation error.}
Within the GD surrogate, Lemma~\ref{lem:ratio_bias} identifies the off-diagonal Hessian coupling as the discrepancy between $r_i$ and $\bar r_i$. Two distinct approximation layers therefore arise:

\begin{enumerate}
    \item replacing the full quadratic dynamics $(I-\eta H_0)^n g_0$ by the
    diagonal surrogate $(I-\eta\,\mathrm{diag}(H_0))^n g_0$;
    \item replacing the diagonal surrogate rate $\bar r_i$ by the empirical
    estimator $r_i = g_{1,i}/g_{0,i}$.
\end{enumerate}

The same empirical quantity $r_i$ is employed in Algorithm~\ref{alg:GXPO} because it is computable from two backward passes. Concretely, the per-coordinate bias from Lemma~\ref{lem:ratio_bias} is
\[
    |r_i - \bar r_i| \leq \eta \sum_{j \neq i} |H_{0,ij}| \cdot \frac{|g_{0,j}|}{|g_{0,i}|},
\]
which is $O(\eta)$ when the off-diagonal Hessian entries and gradient-component ratios are bounded. The error analysis in Appendix~\ref{app:displacement_error} quantifies the displacement consequences of both approximation layers: Theorem~\ref{thm:displacement_error} bounds the diagonalization and non-quadratic errors, and Lemma~\ref{lem:empirical_ratio_error} bounds the additional cost of using empirical ratios.

\begin{remark}[Ratio safeguards in practice]
\label{rem:ratio_safeguard}
The ratio $r_i = g_{1,i}/g_{0,i}$ is undefined at $g_{0,i}=0$ and numerically ill-conditioned when $|g_{0,i}|$ is small. Both the theory and implementation therefore separate active coordinates, where $|g_{0,i}|>\delta$ and the ratio is evaluated, from inactive coordinates, where Algorithm~\ref{alg:GXPO} forms no ratio and keeps the directly observed two-probe displacement. Appendix~\ref{app:displacement_error} bounds the active-coordinate ratio error and the inactive-coordinate fallback error separately; the latter depends on the small-gradient mass $\|g_{0,\mathcal S}\|_1$ and the quadratic two-probe term $\eta^2\|(H_0g_0)_{\mathcal S}\|_1$.
\end{remark}

\subsection{GD-Surrogate Displacement Identity}
\label{app:displacement_formula}

Let $\rho = (\rho_1,\dots,\rho_d)\in\mathbb{R}^d$ be an arbitrary coordinate-wise retention-rate vector. Here $\rho$ is a generic rate vector: later we instantiate this identity with $\rho=\bar r$ for the diagonal surrogate and with $\rho=r$ for the empirical-ratio surrogate. The associated coordinate-wise geometric surrogate gradient is defined by
\begin{equation}
    \hat g_{n,i}^{(\rho)} \;\equiv\; \rho_i^n\,g_{0,i},
    \qquad
    n\geq 0.
    \label{eq:generic_surrogate_gradient}
\end{equation}
Let $\hat\theta_n^{(\rho)}$ be the plain-GD surrogate trajectory generated by these
gradients:
\begin{equation}
    \hat\theta_{n+1}^{(\rho)}
    =
    \hat\theta_n^{(\rho)} - \eta\,\hat g_n^{(\rho)},
    \qquad
    \hat\theta_0^{(\rho)} = \theta_0.
    \label{eq:generic_surrogate_trajectory}
\end{equation}

\begin{proposition}[Total displacement under geometric decay]
\label{prop:displacement}
For every coordinate $i$, every rate vector $\rho$, and every $K\geq 1$,
\begin{equation}
    [\hat\theta_K^{(\rho)} - \theta_0]_i
    =
    -\eta\,g_{0,i}\,S_K(\rho_i),
    \label{eq:generic_displacement}
\end{equation}
where
\begin{equation}
    S_K(x) \;\equiv\; \sum_{n=0}^{K-1} x^n
    =
    \begin{cases}
        \dfrac{1-x^K}{1-x}, & x\neq 1, \\[1.2ex]
        K, & x=1.
    \end{cases}
    \label{eq:geom_sum_def}
\end{equation}
\end{proposition}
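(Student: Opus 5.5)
The plan is to unroll the surrogate recursion \eqref{eq:generic_surrogate_trajectory} directly; no quadratic model or Hessian assumption is needed, since the surrogate gradients \eqref{eq:generic_surrogate_gradient} are prescribed explicitly. First I will show by induction on $K\ge 1$ (or simply by telescoping) that
\[
    \hat\theta_K^{(\rho)}-\theta_0=\sum_{n=0}^{K-1}\bigl(\hat\theta_{n+1}^{(\rho)}-\hat\theta_n^{(\rho)}\bigr)=-\eta\sum_{n=0}^{K-1}\hat g_n^{(\rho)}.
\]
Extracting the $i$-th coordinate and substituting $\hat g_{n,i}^{(\rho)}=\rho_i^n g_{0,i}$ then gives
\[
    [\hat\theta_K^{(\rho)}-\theta_0]_i=-\eta\,g_{0,i}\sum_{n=0}^{K-1}\rho_i^n=-\eta\,g_{0,i}\,S_K(\rho_i),
\]
which is exactly \eqref{eq:generic_displacement} with $S_K$ in its defining sum form.

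It remains to justify the closed form in \eqref{eq:geom_sum_def}, which I will do by splitting into two cases. If $x=1$, each term equals $1$ and the sum is $K$. If $x\neq1$, I will use the same telescoping identity as in Step~4 of the proof of Theorem~\ref{thm:gn_operator}, now in scalar form:
\[
    (1-x)\sum_{n=0}^{K-1}x^n=\sum_{n=0}^{K-1}(x^n-x^{n+1})=1-x^K.
\]
Dividing by the nonzero factor $1-x$ gives the first branch.

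There is no real obstacle here. The only points to state carefully are that the identity holds for every coordinate independently, including coordinates with $g_{0,i}=0$ (both sides then vanish), and that $\rho_i$ is arbitrary, including negative values or values of magnitude greater than $1$, because no convergence of an infinite series is used.
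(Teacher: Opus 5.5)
Your proposal is correct and follows essentially the same argument as the paper. Both telescope the surrogate updates to get $-\eta\sum_{n=0}^{K-1}\hat g_n^{(\rho)}$ and then substitute $\hat g_{n,i}^{(\rho)}=\rho_i^n g_{0,i}$ coordinatewise. Your scalar telescoping check of the closed form of $S_K$ is a harmless addition, since the paper simply takes that closed form as part of the definition.
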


\begin{proof}
Summing the plain-GD surrogate updates gives
\[
    \hat\theta_K^{(\rho)} - \theta_0
    = -\eta \sum_{n=0}^{K-1} \hat g_n^{(\rho)}.
\]
Taking coordinate $i$ and substituting
$\hat g_{n,i}^{(\rho)} = \rho_i^n g_{0,i}$ yields
\[
    [\hat\theta_K^{(\rho)} - \theta_0]_i
    = -\eta\,g_{0,i}\sum_{n=0}^{K-1}\rho_i^n
    = -\eta\,g_{0,i}\,S_K(\rho_i).
\]
\end{proof}

\subsection{Displacement Error Bound for the GD Surrogate}
\label{app:displacement_error}

Three sources of approximation error arise in the derivation:

\begin{enumerate}
    \item \textbf{Diagonalization error:} replacing the full quadratic dynamics
    by the diagonal surrogate with rates $\bar r_i = 1-\eta H_{0,ii}$.
    \item \textbf{Ratio-estimation error:} replacing the diagonal surrogate rates
    $\bar r_i$ by the empirical ratios $r_i = g_{1,i}/g_{0,i}$.
    \item \textbf{Non-quadratic error:} replacing the true loss by its local
    quadratic model around $\theta_0$.
\end{enumerate}

\noindent\emph{Norm conventions.} Throughout this subsection, $\|\cdot\|$ denotes the spectral (operator) norm for matrices and the Euclidean norm for vectors, except where explicitly subscripted (e.g.\ $\|\cdot\|_\infty$ for the row-sum norm in Lemma~\ref{lem:empirical_ratio_error}).
Let
\[
    H_0^{\mathrm{off}} = H_0-\mathrm{diag}(H_0)
\]
denote the off-diagonal part of the Hessian.

Define the exact quadratic-model GD trajectory
$\theta_n^{\mathrm{quad}}$ by
\[
    \theta_{n+1}^{\mathrm{quad}}
    =
    \theta_n^{\mathrm{quad}} - \eta g_n^{\mathrm{quad}},
    \qquad
    g_n^{\mathrm{quad}} = (I-\eta H_0)^n g_0,
    \qquad
    \theta_0^{\mathrm{quad}} = \theta_0,
\]
where $g_n^{\mathrm{quad}}$ is the gradient at $\theta_n^{\mathrm{quad}}$ under the quadratic model (equivalently, the closed form from Theorem~\ref{thm:gn_operator}).
Define the true GD trajectory by
\[
    \theta_{n+1}^{\mathrm{true}}
    =
    \theta_n^{\mathrm{true}}-\eta g(\theta_n^{\mathrm{true}}),
    \qquad
    \theta_0^{\mathrm{true}}=\theta_0.
\]
Define the diagonal-surrogate point by
\[
    \theta_K^{\mathrm{diag}}
    \equiv
    \hat\theta_K^{(\bar r)},
    \qquad
    \bar r_i = 1-\eta H_{0,ii}.
\]
The active-set empirical-ratio surrogate $\theta_K^{\mathrm{emp}}$ is defined in Lemma~\ref{lem:empirical_ratio_error}: it uses empirical ratios on active coordinates and the observed two-probe displacement on inactive coordinates.

\begin{theorem}[Error bound for the diagonal surrogate]
\label{thm:displacement_error}
Assume $K\geq 2$, $\mathcal{L}\in C^3$, and that there exists a neighborhood $\mathcal U$, star-shaped with respect to $\theta_0$, containing the trajectories
\[
    \{\theta_n^{\mathrm{true}}\}_{n=0}^K
    \cup
    \{\theta_n^{\mathrm{quad}}\}_{n=0}^K
\]
such that
\[
    \sup_{\xi\in\mathcal U} \|\nabla^3 \mathcal{L}(\xi)\| \leq M_3.
\]
Let $\gamma:=\|I-\eta H_0\|$ and $\rho_{\max}:=\max(1,\gamma)$.
Assume the true GD trajectory satisfies the uniform gradient bound
\[
    \sup_{0\leq n < K} \|g(\theta_n^{\mathrm{true}})\| \leq G.
\]
Then the diagonal-surrogate displacement error satisfies
\begin{equation}
    \|\theta_K^{\mathrm{diag}} - \theta_K^{\mathrm{true}}\|
    \leq
    \underbrace{\frac{K(K-1)}{2}\,\eta^2\,\|H_0^{\mathrm{off}}\|\,\|g_0\|\,\rho_{\max}^{K-2}}_{\epsilon_{\mathrm{diag}}}
    +
    \underbrace{\frac{K(K-1)(2K-1)}{12}\,\eta^3\,M_3\,G^2\,\rho_{\max}^{K-1}}_{\epsilon_{\mathrm{nonquad}}}.
    \label{eq:revised_error_bound}
\end{equation}
\end{theorem}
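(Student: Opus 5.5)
The plan is to insert the exact quadratic-model trajectory $\theta_K^{\mathrm{quad}}$ between the two points and use the triangle inequality:
\[
\|\theta_K^{\mathrm{diag}}-\theta_K^{\mathrm{true}}\|\le\|\theta_K^{\mathrm{diag}}-\theta_K^{\mathrm{quad}}\|+\|\theta_K^{\mathrm{quad}}-\theta_K^{\mathrm{true}}\|.
\]
The first term should give $\epsilon_{\mathrm{diag}}$ and the second $\epsilon_{\mathrm{nonquad}}$. Throughout, write $A=I-\eta H_0$ and $D=I-\eta\,\mathrm{diag}(H_0)$, so that $A-D=-\eta H_0^{\mathrm{off}}$.

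\emph{Diagonalization term.} By Proposition~\ref{prop:displacement} with $\rho=\bar r$, $\theta_K^{\mathrm{diag}}-\theta_0=-\eta\sum_{n=0}^{K-1}D^n g_0$. Summing the quadratic-model GD updates with Theorem~\ref{thm:gn_operator} gives $\theta_K^{\mathrm{quad}}-\theta_0=-\eta\sum_{n=0}^{K-1}A^n g_0$. Hence the difference is $\eta\sum_{n<K}(A^n-D^n)g_0$. I will expand each term with the telescoping identity
\[
A^n-D^n=\sum_{k=0}^{n-1}A^{k}(A-D)D^{n-1-k}.
\]
This needs $\|D\|\le\rho_{\max}$. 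That holds because each diagonal entry satisfies $|1-\eta H_{0,ii}|=|e_i^\top A e_i|\le\|A\|=\gamma$. It follows that $\|A^n-D^n\|\le n\,\eta\|H_0^{\mathrm{off}}\|\rho_{\max}^{n-1}$. Since $\rho_{\max}\ge1$ and $n\le K-1$, we have $\rho_{\max}^{n-1}\le\rho_{\max}^{K-2}$. Summing $\sum_{n<K}n=K(K-1)/2$ then gives exactly $\epsilon_{\mathrm{diag}}$.

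\emph{Non-quadratic term.} Let $e_n=\theta_n^{\mathrm{true}}-\theta_n^{\mathrm{quad}}$, with $e_0=0$. I will Taylor-expand the true gradient as in \eqref{eq:grad_taylor}: $g(\theta_n^{\mathrm{true}})=g_0+H_0(\theta_n^{\mathrm{true}}-\theta_0)+R_n$, where $\|R_n\|\le\tfrac{M_3}{2}\|\theta_n^{\mathrm{true}}-\theta_0\|^2$. The quadratic gradient is $g_n^{\mathrm{quad}}=g_0+H_0(\theta_n^{\mathrm{quad}}-\theta_0)$; this representation is consistent with the closed form by the induction in the proof of Theorem~\ref{thm:gn_operator}. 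Subtracting gives the linear recursion $e_{n+1}=Ae_n-\eta R_n$, and unrolling it yields
\[
e_K=-\eta\sum_{n=0}^{K-1}A^{K-1-n}R_n .
\]
The gradient bound $G$ gives $\|\theta_n^{\mathrm{true}}-\theta_0\|\le n\eta G$, so $\|R_n\|\le\tfrac{M_3}{2}n^2\eta^2G^2$. Bounding $\|A^{K-1-n}\|\le\rho_{\max}^{K-1}$ and using $\sum_{n<K}n^2=(K-1)K(2K-1)/6$ produces exactly $\epsilon_{\mathrm{nonquad}}$.

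The step needing the most care is justifying the integral-form Taylor remainder bound with the constant $M_3$. This requires the segment from $\theta_0$ to each $\theta_n^{\mathrm{true}}$ to lie in $\mathcal U$, which is exactly what the star-shapedness hypothesis provides. I will state this explicitly when invoking \eqref{eq:grad_taylor}. The remaining bookkeeping is routine; the only other point to check is that the exponents of $\rho_{\max}$ are bounded uniformly, which uses $\rho_{\max}\ge1$.
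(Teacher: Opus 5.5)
Your proposal is correct and follows the paper's proof essentially line for line. You use the same triangle inequality through $\theta_K^{\mathrm{quad}}$ and the same telescoping identity (yours is the reindexed form) with $\|D\|\le\|A\|\le\rho_{\max}$; you then use the same linear recursion for the trajectory difference, $e_{n+1}=Ae_n-\eta R_n$ (the paper calls these $\xi_n$ and $e_n$), unrolled with $\|R_n\|\le\tfrac{M_3}{2}n^2\eta^2G^2$, and your explicit note that star-shapedness of $\mathcal U$ is what licenses the remainder bound with constant $M_3$ makes precise a point the paper leaves implicit when it cites \eqref{eq:grad_taylor}.
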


\begin{remark}[When the quadratic map is nonexpansive]
\label{rem:spectral_assumption}
The theorem above does not require $\|I-\eta H_0\|\leq 1$; growth is carried
explicitly by $\rho_{\max}=\max(1,\|I-\eta H_0\|)$. When $H_0$ is positive
semidefinite with eigenvalues $0 \leq \lambda_1 \leq \cdots \leq \lambda_d$,
the eigenvalues of $I-\eta H_0$ are $1-\eta\lambda_i$. The nonexpansive case
$\|I-\eta H_0\|\leq 1$ is equivalent to the standard quadratic step-size
condition $\eta\lambda_{\max}(H_0)\leq 2$. If $H_0$ has a negative eigenvalue
$\lambda<0$, then $1-\eta\lambda>1$, so $\rho_{\max}>1$ and the bound records
that local expansion rather than hiding it. In Appendix~\ref{app:convergence},
the stronger global diagonal quadratic assumption imposes $h_i>0$ and
$\eta\max_i h_i\leq 1$, which ensures $r_i\in[0,1]$ for the idealized
budget calculation.
\end{remark}

\begin{proof}
Introduce the exact quadratic-model point $\theta_K^{\mathrm{quad}}$ and write
\begin{equation}
    \|\theta_K^{\mathrm{diag}} - \theta_K^{\mathrm{true}}\|
    \leq
    \underbrace{\|\theta_K^{\mathrm{diag}} - \theta_K^{\mathrm{quad}}\|}_{\epsilon_{\mathrm{diag}}}
    +
    \underbrace{\|\theta_K^{\mathrm{quad}} - \theta_K^{\mathrm{true}}\|}_{\epsilon_{\mathrm{nonquad}}}.
    \label{eq:triangle_revised}
\end{equation}

\paragraph{Diagonalization error.}
Let
\[
    A = I-\eta H_0,
    \qquad
    D = I-\eta\,\mathrm{diag}(H_0),
    \qquad
    E = A-D = -\eta H_0^{\mathrm{off}}.
\]
Then
\[
    g_n^{\mathrm{quad}} = A^n g_0,
    \qquad
    g_n^{\mathrm{diag}} = D^n g_0.
\]
We first prove the telescoping identity
\begin{equation}
    A^n - D^n
    =
    \sum_{k=0}^{n-1} A^{n-1-k} E D^k.
    \label{eq:revised_telescoping}
\end{equation}
Indeed,
\begin{align}
    \sum_{k=0}^{n-1} A^{n-1-k} E D^k
    &=
    \sum_{k=0}^{n-1} A^{n-1-k}(A-D)D^k
    \nonumber \\
    &=
    \sum_{k=0}^{n-1}
    \big(A^{n-k}D^k - A^{n-1-k}D^{k+1}\big).
\end{align}
Writing out the first few and last few terms,
\begin{align*}
    k=0 &: \quad A^n D^0 - A^{n-1}D^1, \\
    k=1 &: \quad A^{n-1}D^1 - A^{n-2}D^2, \\
    &\vdots \\
    k=n-1 &: \quad A^1 D^{n-1} - A^0 D^n,
\end{align*}
all intermediate terms cancel, leaving
\[
    A^n D^0 - A^0 D^n = A^n - D^n.
\]
This proves \eqref{eq:revised_telescoping}.

Taking norms in \eqref{eq:revised_telescoping} gives
\begin{align}
    \|A^n - D^n\|
    &\leq
    \sum_{k=0}^{n-1}
    \|A\|^{n-1-k}\,\|E\|\,\|D\|^k.
    \label{eq:revised_power_diff_bound}
\end{align}
Let $\gamma:=\|A\|$ and $\rho_{\max}:=\max(1,\gamma)$. Since $D$ is diagonal and shares the diagonal entries of $A$, $\|D\|=\max_i |A_{ii}|\leq \|A\|=\gamma\leq \rho_{\max}$. Therefore
\[
    \|A^n - D^n\|
    \leq
    \sum_{k=0}^{n-1} \rho_{\max}^{\,n-1} \|E\|
    =
    n\,\rho_{\max}^{\,n-1}\|E\|
    =
    n\,\eta\,\|H_0^{\mathrm{off}}\|\,\rho_{\max}^{\,n-1}.
\]
Multiplying by $\|g_0\|$ gives the per-step gradient error:
\begin{equation}
    \|A^n g_0 - D^n g_0\|
    \leq
    n\,\eta\,\|H_0^{\mathrm{off}}\|\,\|g_0\|\,\rho_{\max}^{\,n-1}.
    \label{eq:revised_per_step_diag_error}
\end{equation}

Accumulating the displacement difference over all steps,
\begin{align}
    \epsilon_{\mathrm{diag}}
    &=
    \left\|
        \eta \sum_{n=0}^{K-1} (A^n g_0 - D^n g_0)
    \right\|
    \nonumber \\
    &\leq
    \eta \sum_{n=0}^{K-1} \|A^n g_0 - D^n g_0\|
    \nonumber \\
    &\leq
    \eta \sum_{n=1}^{K-1}
    n\,\eta\,\|H_0^{\mathrm{off}}\|\,\|g_0\|\,\rho_{\max}^{\,n-1}
    \nonumber \\
    &\leq
    \eta^2\,\|H_0^{\mathrm{off}}\|\,\|g_0\|\,\rho_{\max}^{\,K-2}
    \sum_{n=0}^{K-1} n
    \leq
    \frac{K(K-1)}{2}\,\eta^2\,\|H_0^{\mathrm{off}}\|\,\|g_0\|\,\rho_{\max}^{\,K-2}.
    \label{eq:diag_error_final}
\end{align}

\paragraph{Non-quadratic error.}
Let
\[
    e_n
    \equiv
    g(\theta_n^{\mathrm{true}})
    -
    \big(g_0 + H_0(\theta_n^{\mathrm{true}}-\theta_0)\big).
\]
By the Taylor remainder bound \eqref{eq:grad_taylor},
\begin{equation}
    \|e_n\|
    \leq
    \frac{M_3}{2}\,\|\theta_n^{\mathrm{true}}-\theta_0\|^2.
    \label{eq:revised_taylor_error}
\end{equation}

The displacement of the true trajectory from $\theta_0$ is bounded as follows. Since
\[
    \theta_{m+1}^{\mathrm{true}} - \theta_m^{\mathrm{true}}
    =
    -\eta\,g(\theta_m^{\mathrm{true}}),
\]
the triangle inequality yields
\begin{align}
    \|\theta_n^{\mathrm{true}}-\theta_0\|
    &\leq
    \sum_{m=0}^{n-1}
    \|\theta_{m+1}^{\mathrm{true}}-\theta_m^{\mathrm{true}}\|
    \nonumber \\
    &=
    \eta \sum_{m=0}^{n-1} \|g(\theta_m^{\mathrm{true}})\|
    \nonumber \\
    &\leq
    \eta \sum_{m=0}^{n-1} G
    \nonumber \\
    &=
    n\,\eta\,G.
    \label{eq:true_displacement_bound}
\end{align}
Substituting \eqref{eq:true_displacement_bound} into
\eqref{eq:revised_taylor_error} gives
\begin{equation}
    \|e_n\|
    \leq
    \frac{M_3}{2}\,n^2\,\eta^2\,G^2.
    \label{eq:en_revised}
\end{equation}

Define the trajectory difference
\[
    \xi_n \equiv \theta_n^{\mathrm{true}} - \theta_n^{\mathrm{quad}}.
\]
Because both trajectories start at $\theta_0$, we have $\xi_0=0$. Their update
difference satisfies
\begin{align}
    \xi_{n+1}
    &=
    \xi_n
    -
    \eta\Big(
        g(\theta_n^{\mathrm{true}})
        -
        \big(g_0 + H_0(\theta_n^{\mathrm{quad}}-\theta_0)\big)
    \Big)
    \nonumber \\
    &=
    \xi_n
    -
    \eta\Big(
        g_0 + H_0(\theta_n^{\mathrm{true}}-\theta_0) + e_n
        -
        g_0 - H_0(\theta_n^{\mathrm{quad}}-\theta_0)
    \Big)
    \nonumber \\
    &=
    \xi_n - \eta H_0 \xi_n - \eta e_n
    \nonumber \\
    &=
    (I-\eta H_0)\xi_n - \eta e_n
    \nonumber \\
    &=
    A \xi_n - \eta e_n.
\end{align}
Unrolling the recurrence from $\xi_0=0$ gives
\begin{equation}
    \xi_K
    =
    -\eta \sum_{n=0}^{K-1} A^{K-1-n} e_n.
    \label{eq:xi_unroll_revised}
\end{equation}

Take norms and use $\|A^{K-1-n}\|\le\rho_{\max}^{\,K-1-n}$:
\begin{align}
    \epsilon_{\mathrm{nonquad}}
    =
    \|\xi_K\|
    &\leq
    \eta \sum_{n=0}^{K-1}
    \|A^{K-1-n}\|\,\|e_n\|
    \leq
    \eta \sum_{n=0}^{K-1} \rho_{\max}^{\,K-1-n}\|e_n\|
    \leq
    \eta\,\rho_{\max}^{\,K-1}\sum_{n=0}^{K-1} \|e_n\|.
\end{align}
Substituting \eqref{eq:en_revised},
\begin{align}
    \epsilon_{\mathrm{nonquad}}
    &\leq
    \eta\rho_{\max}^{K-1}\sum_{n=0}^{K-1}\frac{M_3}{2}n^2\eta^2G^2
    \leq
    \frac{M_3\eta^3G^2}{2}\rho_{\max}^{K-1}\sum_{n=0}^{K-1}n^2
    \nonumber\\
    &\leq
    \frac{M_3\eta^3G^2}{2}\rho_{\max}^{K-1}\cdot\frac{K(K-1)(2K-1)}{6}
    \leq
    \frac{K(K-1)(2K-1)}{12}\eta^3M_3G^2\rho_{\max}^{K-1}.
    \label{eq:nonquad_error_final}
\end{align}

Combining \eqref{eq:triangle_revised},
\eqref{eq:diag_error_final}, and \eqref{eq:nonquad_error_final} yields
\eqref{eq:revised_error_bound}.
\end{proof}

\begin{lemma}[Additional error from active-set empirical ratios in the GD surrogate]
\label{lem:empirical_ratio_error}
Under the plain-GD surrogate and the local quadratic model, consider the clean active-set surrogate corresponding to GXPO.
Let $\|H_0^{\mathrm{off}}\|_\infty\equiv\max_i\sum_{j\ne i}|H_{0,ij}|$.
Let $\delta>0$ be the active-set threshold in Algorithm~\ref{alg:GXPO}.
Partition coordinates into $\mathcal{A}=\{i:|g_{0,i}|>\delta\}$ and
$\mathcal{S}=\mathcal{A}^c$. For $i\in\mathcal{A}$, let
$r_i=g_{1,i}/g_{0,i}$ be the empirical active-set ratio and assume
$|r_i|\le R$ on $\mathcal{A}$ and $|\bar r_i|\le R$ on all coordinates. Define
\[
    C_{K,R}=\sum_{n=1}^{K-1}nR^{n-1},
    \qquad
    D_{K,R}=2+\sum_{n=0}^{K-1}R^n .
\]
Let $\theta_K^{\mathrm{emp}}$ be the active-set surrogate that applies
$-\eta g_{0,i}S_K(r_i)$ on $\mathcal{A}$ and keeps the observed two-probe
quadratic displacement $-\eta(g_{0,i}+g_{1,i})$ on $\mathcal{S}$. Then
\begin{equation}
\begin{aligned}
\|\theta_K^{\mathrm{emp}}-\theta_K^{\mathrm{diag}}\|
&\le
\eta^2 C_{K,R}
\frac{\|H_0^{\mathrm{off}}\|_\infty}{\delta} \\
&\qquad {}\times
\|g_0\|_\infty\|g_{0,\mathcal{A}}\| \\
&\quad
+ \eta D_{K,R}\|g_{0,\mathcal{S}}\|_1
+ \eta^2\|(H_0g_0)_{\mathcal{S}}\|_1 ,
\end{aligned}
\label{eq:empirical_ratio_error_bound}
\end{equation}
where $\|g_{0,\mathcal{A}}\|$ is the Euclidean norm on $\mathcal{A}$ and
$\|g_{0,\mathcal{S}}\|_1=\sum_{i\in\mathcal{S}}|g_{0,i}|$.
\end{lemma}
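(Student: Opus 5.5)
The plan is to bound the difference coordinate by coordinate, splitting it over the active set $\mathcal A$ and the inactive set $\mathcal S$. Since the Euclidean norm is dominated by the sum of the Euclidean norm on $\mathcal A$ and the $\ell_1$ norm on $\mathcal S$, we have
\[
\|\theta_K^{\mathrm{emp}}-\theta_K^{\mathrm{diag}}\|\le \|(\theta_K^{\mathrm{emp}}-\theta_K^{\mathrm{diag}})_{\mathcal A}\|+\|(\theta_K^{\mathrm{emp}}-\theta_K^{\mathrm{diag}})_{\mathcal S}\|_1 .
\]
By Proposition~\ref{prop:displacement} with $\rho=\bar r$, the diagonal point has coordinates $[\theta_K^{\mathrm{diag}}-\theta_0]_i=-\eta g_{0,i}S_K(\bar r_i)$ on every coordinate.

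\emph{Active part.} For $i\in\mathcal A$ the $i$-th difference is $\eta|g_{0,i}|\,|S_K(r_i)-S_K(\bar r_i)|$. Writing $S_K(x)=\sum_{n<K}x^n$ and using $x^n-y^n=(x-y)\sum_{k<n}x^ky^{n-1-k}$ with $|r_i|,|\bar r_i|\le R$ gives
\[
|S_K(r_i)-S_K(\bar r_i)|\le |r_i-\bar r_i|\sum_{n=1}^{K-1}nR^{n-1}=C_{K,R}|r_i-\bar r_i| .
\]
By Lemma~\ref{lem:ratio_bias},
\[
|r_i-\bar r_i|\le \eta\sum_{j\ne i}|H_{0,ij}|\,\frac{|g_{0,j}|}{|g_{0,i}|}\le \eta\,\frac{\|H_0^{\mathrm{off}}\|_\infty\|g_0\|_\infty}{|g_{0,i}|}.
\]
Multiplying by $\eta|g_{0,i}|$ cancels $|g_{0,i}|$ and gives a per-coordinate bound $\eta^2 C_{K,R}\|H_0^{\mathrm{off}}\|_\infty\|g_0\|_\infty$, which is uniform in $i$. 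To get the stated form, which carries the factor $\|g_{0,\mathcal A}\|/\delta$, I would instead keep one factor $|g_{0,i}|/|g_{0,i}|$. I would bound $1/|g_{0,i}|\le 1/\delta$ on $\mathcal A$ and keep $|g_{0,i}|$ in the numerator. The per-coordinate error is then at most $\eta^2C_{K,R}\frac{\|H_0^{\mathrm{off}}\|_\infty\|g_0\|_\infty}{\delta}|g_{0,i}|$. Taking the Euclidean norm over $\mathcal A$ yields exactly the first term.

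\emph{Inactive part.} For $i\in\mathcal S$, the empirical point uses $-\eta(g_{0,i}+g_{1,i})$, where in the GD surrogate $g_{1,i}=g_{0,i}-\eta[H_0g_0]_i$ by \eqref{eq:g1}. The diagonal point uses $-\eta g_{0,i}S_K(\bar r_i)$. The triangle inequality then gives
\[
|\cdot|\le \eta|2g_{0,i}-\eta[H_0g_0]_i|+\eta|g_{0,i}||S_K(\bar r_i)|\le \eta\Bigl(2+\sum_{n<K}R^n\Bigr)|g_{0,i}|+\eta^2|[H_0g_0]_i| .
\]
Summing over $\mathcal S$ produces $\eta D_{K,R}\|g_{0,\mathcal S}\|_1+\eta^2\|(H_0g_0)_{\mathcal S}\|_1$. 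This deliberately does not compare the two expressions finely; it simply bounds each one, which is what the $D_{K,R}$ constant reflects.

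The main obstacle is only bookkeeping in the active part. I have to decide where to spend the $1/|g_{0,i}|$ factor so that the result matches the stated combination of $\|g_0\|_\infty$, $\|g_{0,\mathcal A}\|$ and $1/\delta$. I also have to check that the mean-value bound on $x^n-y^n$ legitimately uses the common bound $R$ for both $r_i$ and $\bar r_i$, which the hypothesis supplies. Adding the active and inactive contributions completes the bound.
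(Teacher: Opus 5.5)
Your proposal is correct and follows essentially the same route as the paper: you split into $\mathcal A$ and $\mathcal S$, bound $|S_K(r_i)-S_K(\bar r_i)|\le C_{K,R}|r_i-\bar r_i|$ (you via the factorization of $x^n-y^n$, the paper via $\sup_{|x|\le R}|S_K'(x)|$), invoke Lemma~\ref{lem:ratio_bias} with $1/|g_{0,i}|\le 1/\delta$ on $\mathcal A$, and on $\mathcal S$ substitute $g_{1,i}=g_{0,i}-\eta[H_0g_0]_i$ and use the crude bound $2+|S_K(\bar r_i)|\le D_{K,R}$. Your side remark is also valid: cancelling $|g_{0,i}|$ gives a uniform per-coordinate bound that is actually tighter, since $\sqrt{|\mathcal A|}\le\|g_{0,\mathcal A}\|/\delta$, and it implies the stated one.
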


\begin{proof}
For any scalar $x$, write $S_K(x)=\sum_{n=0}^{K-1}x^n$. On the active set, the empirical and diagonal surrogate displacements differ coordinate-wise by
\[
    [\theta_K^{\mathrm{emp}}-\theta_K^{\mathrm{diag}}]_i
    =
    -\eta g_{0,i}\bigl(S_K(r_i)-S_K(\bar r_i)\bigr).
\]
The polynomial $S_K$ is Lipschitz on $[-R,R]$ with constant
\[
    \sup_{|x|\le R}|S_K'(x)|
    =
    \sup_{|x|\le R}\left|\sum_{n=1}^{K-1}n x^{n-1}\right|
    \le C_{K,R}.
\]
Lemma~\ref{lem:ratio_bias} gives, for $i\in\mathcal{A}$,
\[
    |r_i-\bar r_i|
    \le
    \eta\sum_{j\ne i}|H_{0,ij}|\frac{|g_{0,j}|}{|g_{0,i}|}
    \le
    \eta\frac{\|H_0^{\mathrm{off}}\|_\infty\|g_0\|_\infty}{\delta}.
\]
Therefore
\[
    \|(\theta_K^{\mathrm{emp}}-\theta_K^{\mathrm{diag}})_{\mathcal{A}}\|
    \le
    \eta C_{K,R}
    \eta\frac{\|H_0^{\mathrm{off}}\|_\infty\|g_0\|_\infty}{\delta}
    \|g_{0,\mathcal{A}}\|.
\]
For $i\in\mathcal{S}$, the clean active-set surrogate corresponding to GXPO forms no ratio and keeps the
observed two-probe displacement. Under the quadratic model,
$g_1=g_0-\eta H_0g_0$, so
\[
    [\theta_K^{\mathrm{emp}}-\theta_K^{\mathrm{diag}}]_i
    =
    -\eta(g_{0,i}+g_{1,i})+\eta g_{0,i}S_K(\bar r_i)
    =
    \eta g_{0,i}\bigl(S_K(\bar r_i)-2\bigr)
    +\eta^2[H_0g_0]_i .
\]
Since $|\bar r_i|\le R$,
$|S_K(\bar r_i)-2|\le 2+\sum_{n=0}^{K-1}R^n=D_{K,R}$.
Thus
\[
    \|(\theta_K^{\mathrm{emp}}-\theta_K^{\mathrm{diag}})_{\mathcal{S}}\|
    \le
    \eta D_{K,R}\|g_{0,\mathcal{S}}\|_1
    +\eta^2\|(H_0g_0)_{\mathcal{S}}\|_1.
\]
Adding the two coordinate groups proves the claim.
\end{proof}

\begin{corollary}[Combined bound for the empirical-ratio surrogate]
\label{cor:combined_empirical_bound}
Under Theorem~\ref{thm:displacement_error} and Lemma~\ref{lem:empirical_ratio_error},
\begin{equation}
\begin{aligned}
\|\theta_K^{\mathrm{emp}}-\theta_K^{\mathrm{true}}\|
&\le
\frac{K(K-1)}{2}\eta^2\|H_0^{\mathrm{off}}\|\,\|g_0\|\rho_{\max}^{K-2}
+\eta^2 C_{K,R}\frac{\|H_0^{\mathrm{off}}\|_\infty}{\delta}
\|g_0\|_\infty\|g_{0,\mathcal{A}}\| \\
&\quad
+ \eta D_{K,R}\|g_{0,\mathcal{S}}\|_1
+ \eta^2\|(H_0g_0)_{\mathcal{S}}\|_1
+ \frac{K(K-1)(2K-1)}{12}\eta^3M_3G^2\rho_{\max}^{K-1}.
\end{aligned}
\label{eq:combined_empirical_bound}
\end{equation}
\end{corollary}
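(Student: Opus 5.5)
The argument is a single triangle inequality through the diagonal-surrogate point $\theta_K^{\mathrm{diag}}=\hat\theta_K^{(\bar r)}$:
\[
    \|\theta_K^{\mathrm{emp}}-\theta_K^{\mathrm{true}}\|
    \le
    \|\theta_K^{\mathrm{emp}}-\theta_K^{\mathrm{diag}}\|
    +
    \|\theta_K^{\mathrm{diag}}-\theta_K^{\mathrm{true}}\|.
\]
The two terms on the right are exactly the quantities bounded by Lemma~\ref{lem:empirical_ratio_error} and Theorem~\ref{thm:displacement_error}, so no new estimate is needed. I will check that both results concern the same object $\theta_K^{\mathrm{diag}}$, built from $\theta_0$ and the rates $\bar r_i=1-\eta H_{0,ii}$ via Proposition~\ref{prop:displacement}, and then add the bounds.

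First, I invoke Lemma~\ref{lem:empirical_ratio_error}. Its hypotheses are that $|r_i|\le R$ on $\mathcal A$, that $|\bar r_i|\le R$ on all coordinates, and that $\theta_K^{\mathrm{emp}}$ is the active-set surrogate. These give the middle three terms of \eqref{eq:combined_empirical_bound}: the $\eta^2 C_{K,R}\|H_0^{\mathrm{off}}\|_\infty\|g_0\|_\infty\|g_{0,\mathcal A}\|/\delta$ ratio term, the fallback term $\eta D_{K,R}\|g_{0,\mathcal S}\|_1$, and the quadratic two-probe term $\eta^2\|(H_0g_0)_{\mathcal S}\|_1$.

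Second, I invoke Theorem~\ref{thm:displacement_error}. Its hypotheses are $K\ge2$, $\mathcal L\in C^3$, the star-shaped neighborhood with $M_3$, and the gradient bound $G$ along the true trajectory. These give $\epsilon_{\mathrm{diag}}+\epsilon_{\mathrm{nonquad}}$, which are the first and last terms of \eqref{eq:combined_empirical_bound}, with $\rho_{\max}=\max(1,\|I-\eta H_0\|)$. Summing the two bounds yields the claim verbatim.

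The only point needing care, and it is minor, is that the two results are stated under slightly different modelling layers. Lemma~\ref{lem:empirical_ratio_error} is proved under the local quadratic model, where $g_1=(I-\eta H_0)g_0$. Theorem~\ref{thm:displacement_error} compares against the true trajectory. I must make sure that $\theta_K^{\mathrm{emp}}$ is interpreted with $g_1$ computed under the quadratic model, as the Corollary's hypothesis ``under Lemma~\ref{lem:empirical_ratio_error}'' implies. Otherwise an extra Taylor-remainder term of order $\eta^3 M_3\|g_0\|^2/\delta$ would appear in the ratio error, and that term is not present in the stated bound. With that reading fixed, the proof is just the triangle inequality and does not rely on any further hidden step.
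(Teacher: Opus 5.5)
Your proposal is correct and matches the paper's proof: a single triangle inequality through $\theta_K^{\mathrm{diag}}$, with Lemma~\ref{lem:empirical_ratio_error} bounding $\|\theta_K^{\mathrm{emp}}-\theta_K^{\mathrm{diag}}\|$ and Theorem~\ref{thm:displacement_error} bounding $\|\theta_K^{\mathrm{diag}}-\theta_K^{\mathrm{true}}\|$, then summing the two bounds. Your added caveat, that $g_1$ (and hence $\theta_K^{\mathrm{emp}}$) must be read under the local quadratic model, agrees with the Lemma's hypotheses and makes explicit something the paper leaves implicit.
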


\begin{proof}
By the triangle inequality,
\[
    \|\theta_K^{\mathrm{emp}}-\theta_K^{\mathrm{true}}\|
    \leq
    \|\theta_K^{\mathrm{emp}}-\theta_K^{\mathrm{diag}}\|
    +
    \|\theta_K^{\mathrm{diag}}-\theta_K^{\mathrm{true}}\|.
\]
Applying Lemma~\ref{lem:empirical_ratio_error} to the first term and
Theorem~\ref{thm:displacement_error} to the second term gives
\eqref{eq:combined_empirical_bound}.
\end{proof}

\begin{remark}[Interpretation and scaling of the practical bound]
\label{rem:practical_bound}
Theorem~\ref{thm:displacement_error} is the cleanest bound: it analyzes the
diagonal surrogate exactly and separates off-diagonal coupling from the
non-quadratic Taylor remainder. Corollary~\ref{cor:combined_empirical_bound}
then adds the extra price paid by GXPO for estimating the diagonal surrogate
rate using the observable active-set ratio $r_i=g_{1,i}/g_{0,i}$ and for
retaining the observed two-probe displacement on inactive coordinates.

For fixed $\delta$ and bounded ratios, the diagonalization and empirical-ratio terms scale as $O(\eta^2)$, while the non-quadratic term scales as $O(\eta^3)$. The inactive-coordinate contribution is controlled by the total gradient mass below the stabilization threshold, $\|g_{0,\mathcal{S}}\|_1$, and the quadratic two-probe term $\eta^2\|(H_0g_0)_{\mathcal{S}}\|_1$. For illustration, if $K=3$ and $\eta=10^{-7}$, Corollary~\ref{cor:combined_empirical_bound} becomes
\[
\begin{aligned}
\|\theta^{\mathrm{emp}}_3-\theta^{\mathrm{true}}_3\|
\leq&
3\cdot 10^{-14}\|H_0^{\mathrm{off}}\|\|g_0\|\rho_{\max} \\
&+
10^{-14}C_{3,R}
\frac{\|H_0^{\mathrm{off}}\|_\infty}{\delta}
\|g_0\|_\infty\|g_{0,\mathcal{A}}\| \\
&+
10^{-7}D_{3,R}\|g_{0,\mathcal{S}}\|_1
+
10^{-14}\|(H_0g_0)_{\mathcal{S}}\|_1 \\
&+
2.5\cdot 10^{-21}M_3G^2\rho_{\max}^2 .
\end{aligned}
\]
Here $C_{3,R}=1+2R$ and $D_{3,R}=3+R+R^2$. Thus, for small learning rate and bounded local quantities, the off-diagonal, active-ratio, and non-quadratic terms are strongly suppressed by $10^{-14}$ or $10^{-21}$. The only term with linear dependence on $\eta$ is the inactive-coordinate fallback term, $10^{-7}D_{3,R}\|g_{0,\mathcal{S}}\|_1$, so the bound predicts small error when the inactive-gradient mass is limited. Consistent with this small-error regime, Table~\ref{tab:error_diagnostics} reports $K=3$ median displacement errors of $8.386{\times}10^{-10}$ for $\theta_K$ and $5.084{\times}10^{-10}$ after interpolation to $\tilde{\theta}$. These empirical values are not implied numerically by the bound; they are observed diagnostics that support the bound's intended local regime.
\end{remark}


\subsection{Idealized Diagonal-Quadratic GD-Surrogate Budget Check}
\label{app:convergence}
\label{app:idealized_budget_check}

This subsection gives an algebraic sanity check for the extrapolation rule in the cleanest possible setting. It does not model the full stateful optimizer dynamics used in the implemented method. Instead, it studies a deterministic plain-GD surrogate under a global diagonal quadratic loss, where the coordinate-wise geometric model is exact.

Consider
\[
    \mathcal{L}(\theta)=\frac{1}{2}\theta^\top H_0\theta,
    \qquad
    H_0=\mathrm{diag}(h_1,\ldots,h_d),
    \qquad
    h_i>0,
\]
and assume the GD step size satisfies $\eta h_i\leq 1$ for all $i$. Let
\[
    r_i = 1-\eta h_i .
\]
Under plain GD,
\[
    \theta_{n+1}=\theta_n-\eta g_n,
    \qquad
    g_n=\nabla \mathcal{L}(\theta_n),
\]
each coordinate evolves independently:
\[
    g_{n,i}=r_i^n g_{0,i}.
\]
By Proposition~\ref{prop:displacement}, the $K$-step GD displacement is
\[
    [\theta_K-\theta_0]_i
    =
    -\eta g_{0,i} S_K(r_i),
    \qquad
    S_K(r_i)=\sum_{n=0}^{K-1}r_i^n .
\]

In the clean GXPO surrogate, the first two GD probe steps produce
\[
    [\theta_2-\theta_0]_i
    =
    -\eta g_{0,i}S_2(r_i).
\]
If finite-precision stabilizers and inactive-coordinate fallback are omitted, GXPO scales this observed two-step displacement by
\[
    \frac{S_K(r_i)}{S_2(r_i)}.
\]
Hence the extrapolated point satisfies
\[
    [\theta_K^{\mathrm{GXPO}}-\theta_0]_i
    =
    [\theta_2-\theta_0]_i
    \frac{S_K(r_i)}{S_2(r_i)}
    =
    -\eta g_{0,i}S_K(r_i),
\]
which is exactly the $K$-th plain-GD iterate.

With full extrapolation $\alpha=1$, the corrective gradient is then evaluated at this exact $K$-step point, and the final correction gives
\[
    \theta_{\mathrm{new}}^{\mathrm{GXPO}}
    =
    \theta_K-\eta g_K
    =
    \theta_{K+1}^{\mathrm{GD}}.
\]
Thus, in this idealized diagonal-quadratic GD surrogate, one active GXPO outer step using three backward passes lands at the same point as $K+1$ plain-GD steps.

Let
\[
    \mu=\min_i h_i>0,
    \qquad
    \rho=(1-\eta\mu)^2\in[0,1).
\]
Since $0\leq r_i\leq 1-\eta\mu$, plain GD satisfies
\[
    \mathcal{L}(\theta_n^{\mathrm{GD}})
    =
    \frac{1}{2}\sum_i h_i r_i^{2n}\theta_{0,i}^2
    \leq
    \rho^n \mathcal{L}(\theta_0).
\]
After $m$ active GXPO outer steps, the surrogate reaches the same point as $(K+1)m$ GD steps, so
\[
    \mathcal{L}(\theta_m^{\mathrm{GXPO}})
    \leq
    \rho^{(K+1)m}\mathcal{L}(\theta_0).
\]
Since each active GXPO step uses three backward passes, for $B=3m$ backward passes,
\[
    \mathcal{L}(\theta_{B/3}^{\mathrm{GXPO}})
    \leq
    \rho^{(K+1)B/3}\mathcal{L}(\theta_0).
\]
If $0<\rho<1$, then to reach $\mathcal{L}(\theta)\leq \varepsilon$, this idealized surrogate requires
\[
    B
    \geq
    \frac{3}{K+1}
    \frac{\log(\mathcal{L}(\theta_0)/\varepsilon)}
         {\log(1/\rho)} .
\]
This proves the diagonal-quadratic GD-surrogate sanity check stated in Corollary~\ref{cor:full_extrapolation_rate}.

This result should be read only as a sanity check for the extrapolation formula. The exact identity relies on a global diagonal quadratic loss, deterministic plain GD, full extrapolation $\alpha=1$, exact geometric sums, and no active-set fallback or finite-precision stabilization. The implemented GXPO update uses the chosen actor optimizer and may use partial extrapolation, so the practical experiments should not be interpreted as being governed by this idealized rate. The purpose of the calculation is only to show that, when the coordinate-wise geometric model is exact, the GXPO extrapolation has the intended multi-step GD interpretation.

\section{Geometric Extrapolation Diagnostics}
\label{app:geo_diagnostics}

We analyze GXPO behavior across training dynamics, compute normalization, and hyperparameter settings. Larger $k$ and $\alpha$ yield improved optimization efficiency, as confirmed when measured against backward passes (Fig.~\ref{fig:acc-bp}), rather than reflecting purely increased compute. Peak performance exhibits a trade-off with time-to-peak and a stable high-performing region in the $\alpha$--$k$ landscape (Fig.~\ref{fig:gxpo_alpha_combined}), while varying $\tau$ (for $k=5$) shows consistent gains across training steps, wall-clock time, and backward passes (Fig.~\ref{fig:tau-k5}). Auxiliary metrics further reveal that larger $k$ and $\alpha$ lead to longer and more variable responses in tokens (Fig.~\ref{fig:reward-length}), and GXPO diagnostics peak early and collapse after the GXPO-to-GRPO transition (Fig.~\ref{fig:gxpo-diagnostics}), indicating that GXPO primarily affects early training dynamics. Retention ratios follow a similar pattern, increasing with $k$ and $\alpha$ during the active phase and dropping sharply at shutoff (Fig.~\ref{fig:retention-ratio}).


\begin{figure}[h]
    \centering
    \begin{subfigure}[b]{0.49\linewidth}
        \centering
        \safeincludegraphics[width=\linewidth]{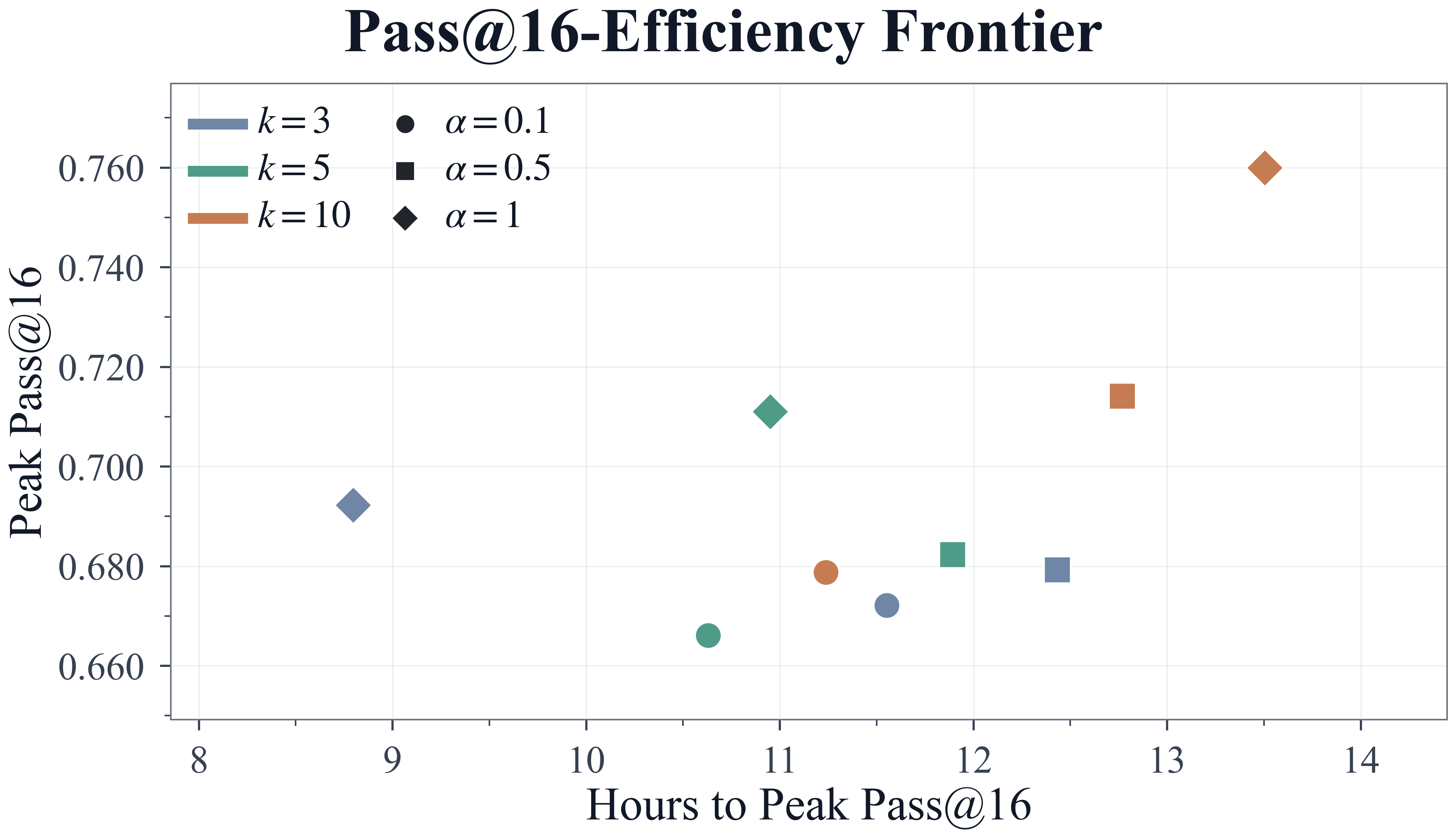}
    \end{subfigure}
    \hfill
    \begin{subfigure}[b]{0.49\linewidth}
        \centering
        \safeincludegraphics[width=\linewidth]{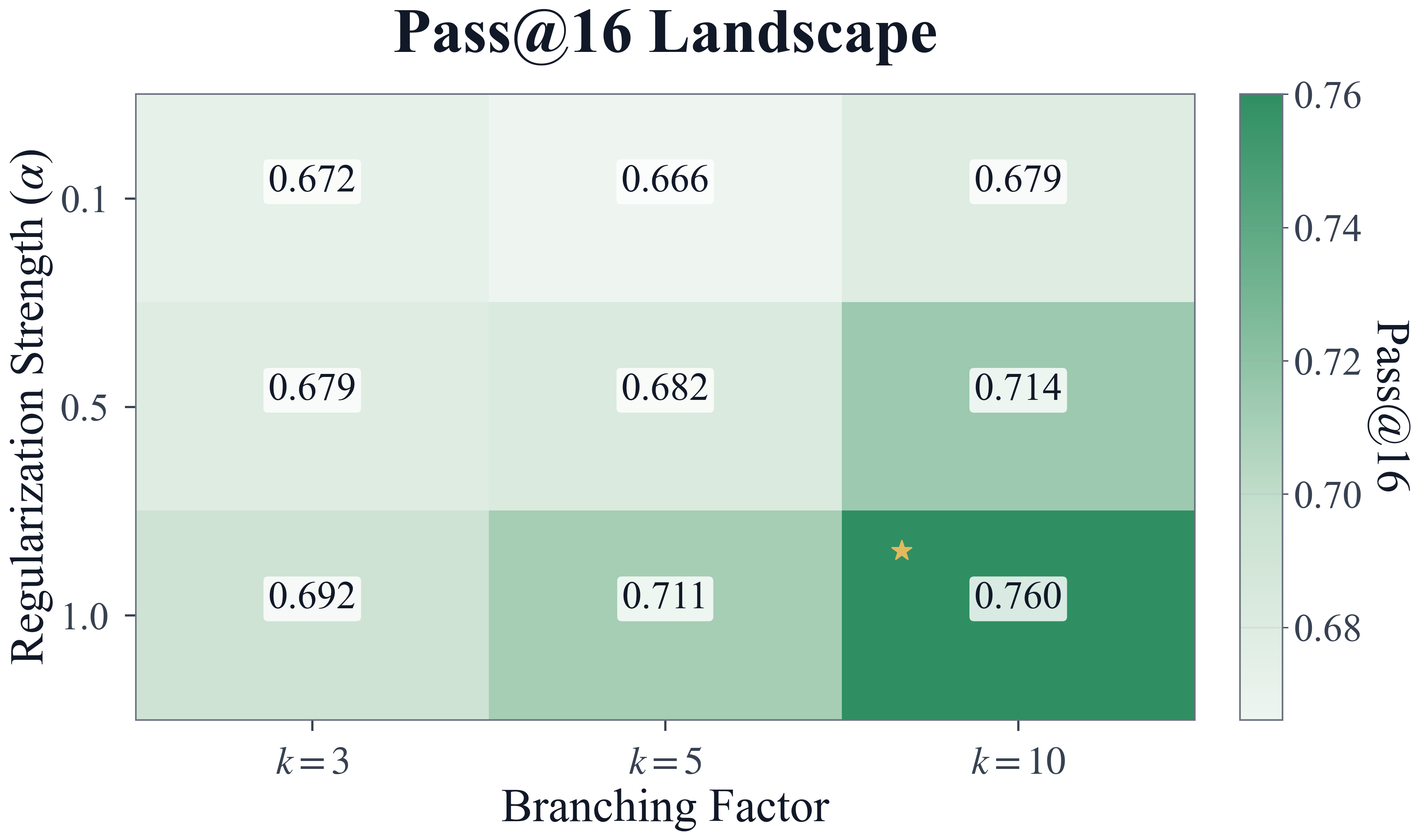}
    \end{subfigure}
    \caption{
    GXPO ablations on Math-500 with Qwen2.5-1.5B. Left: peak Pass@16 versus time-to-peak across $(k,\alpha)$. Right: peak Pass@16 over the $\alpha$--$k$ grid.
    }
    \label{fig:gxpo_alpha_combined}
\end{figure}

\begin{figure}[h]
    \centering
    \includegraphics[width=\columnwidth]{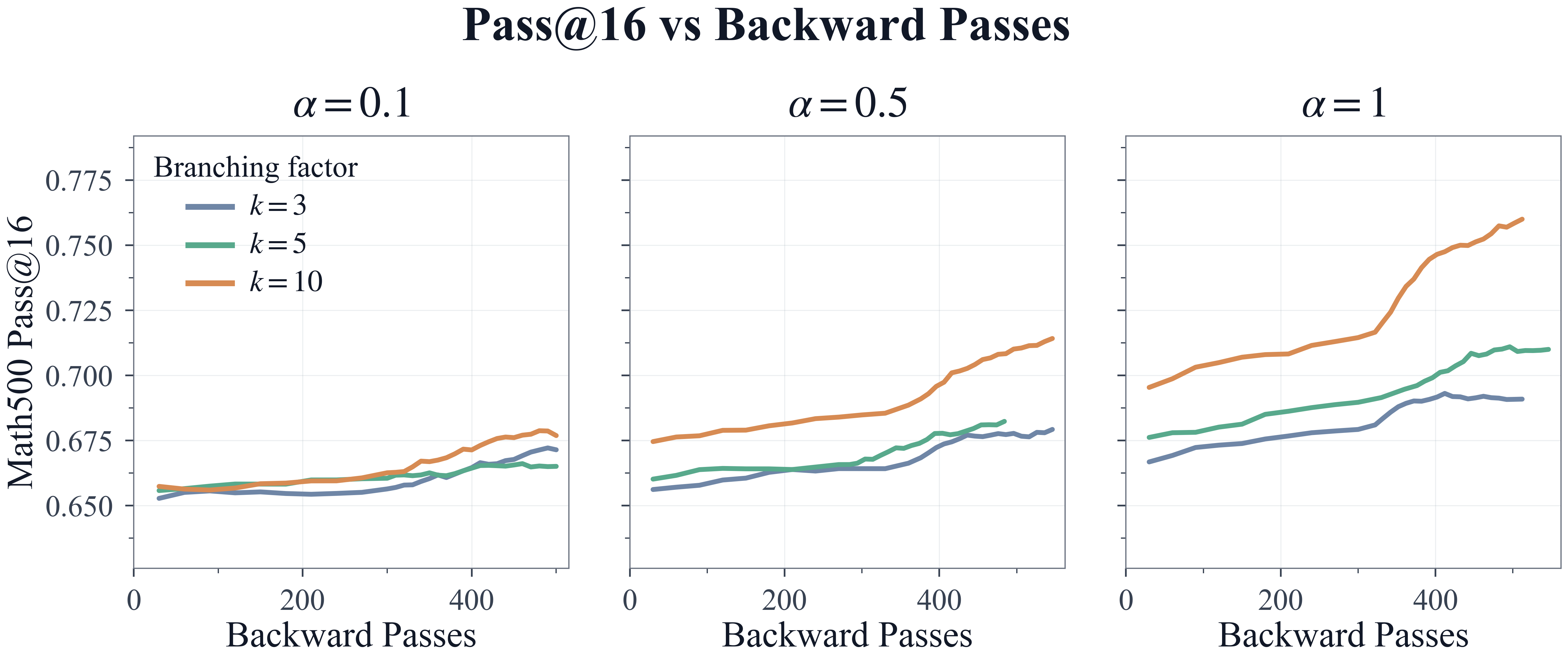}
    \caption{
        Pass@16 (EMA) versus backward passes across $\alpha \in \{0.1, 0.5, 1\}$ and $k \in \{3,5,10\}$. Larger $k$ maintains a consistent advantage under compute normalization.
    }
    \label{fig:acc-bp}
\end{figure}

\begin{figure}[h]
    \centering
    \includegraphics[width=\columnwidth]{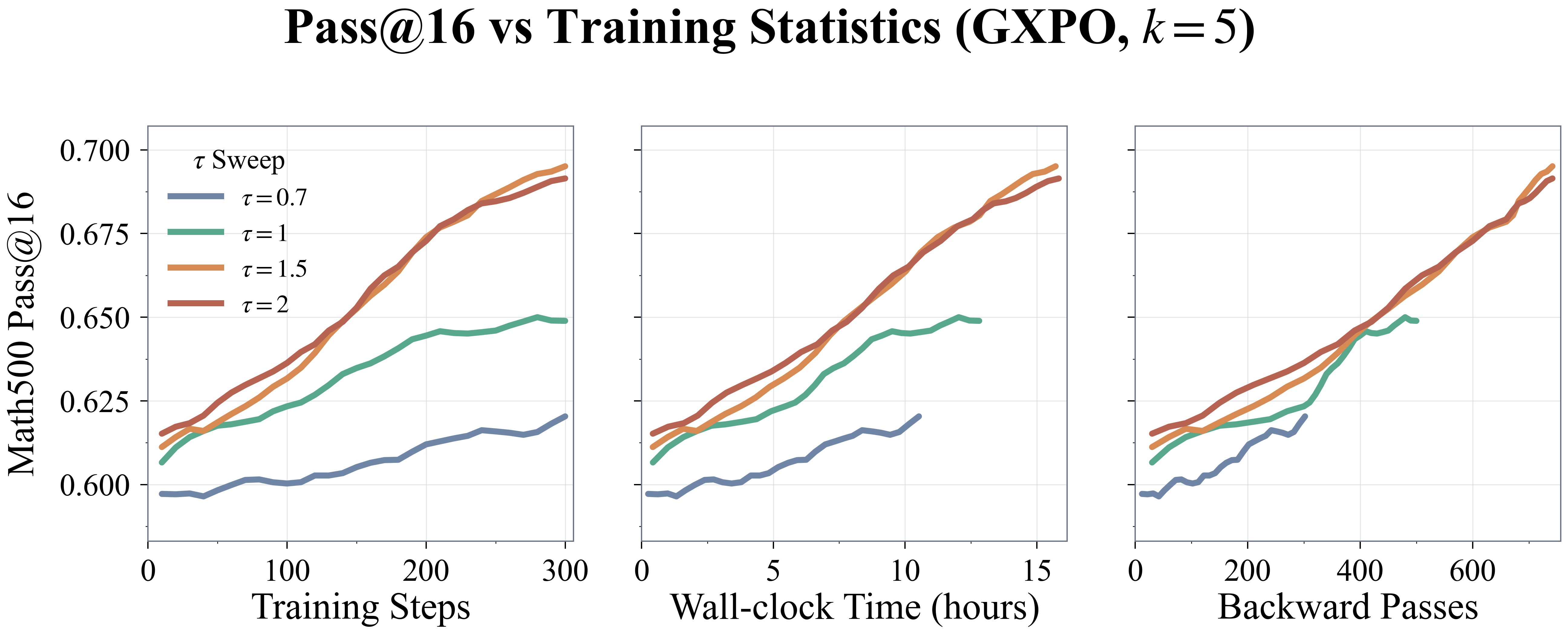}
    \caption{
        Pass@16 (EMA) for $k=5$ under $\tau \in \{0.7, 1, 1.5, 2\}$ versus training steps (left), wall-clock time (center), and backward passes (right). Larger $\tau$ achieves higher accuracy across all views.
    }
    \label{fig:tau-k5}
\end{figure}

\begin{figure*}[h]
    \centering
    \includegraphics[width=\linewidth]{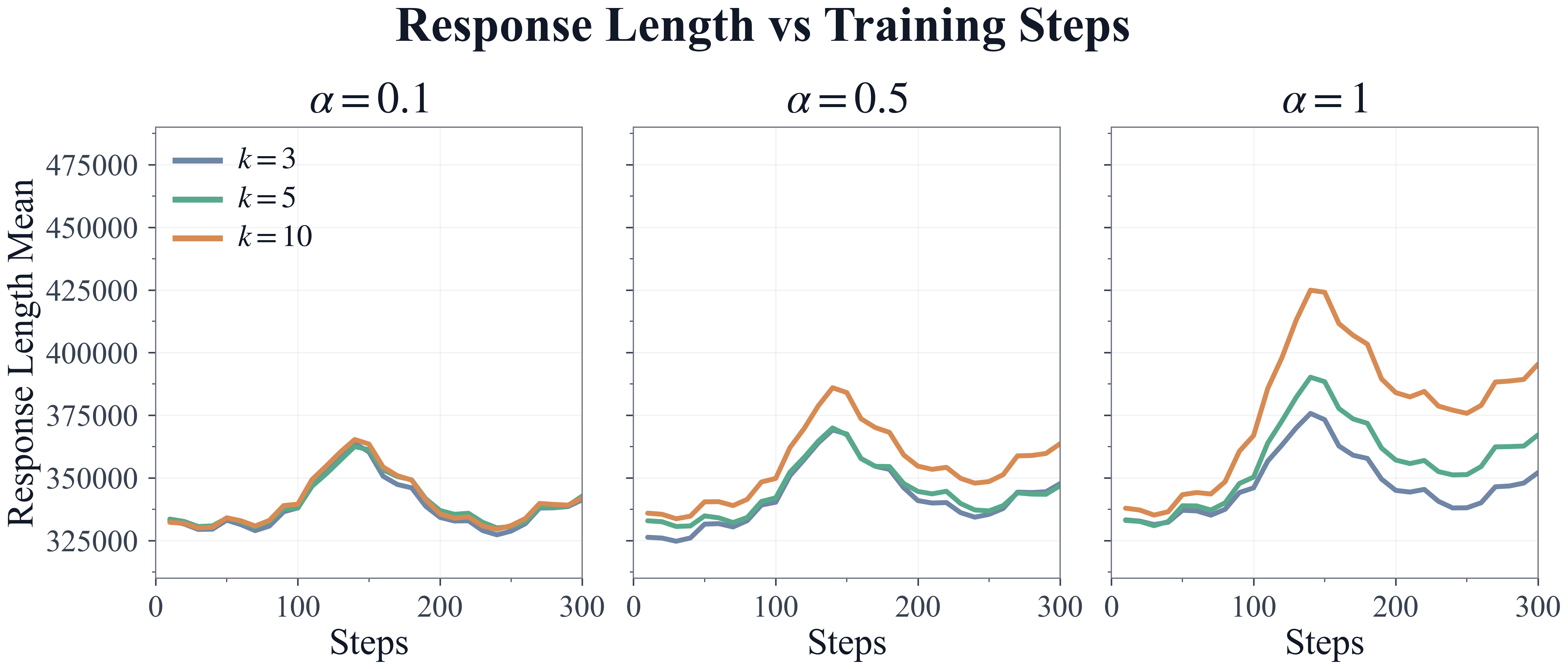}
    \caption{
        Mean response length (in tokens) versus training steps for $\alpha \in \{0.1, 0.5, 1\}$ and $k \in \{3,5,10\}$. Larger values of $k$ and $\alpha$ lead to longer responses and increased variability.
    }
    \label{fig:reward-length}
\end{figure*}

\begin{figure*}[h]
    \centering
    \includegraphics[width=\textwidth]{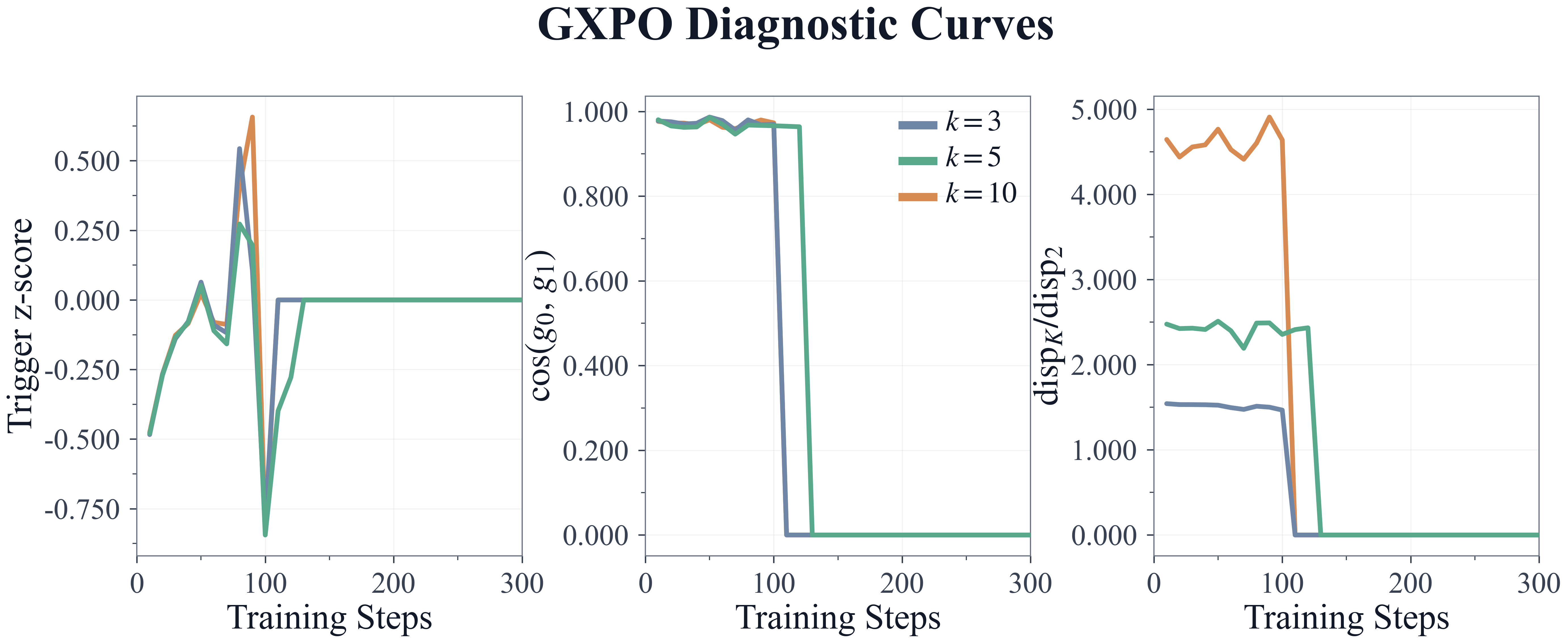}
    \caption{
        GXPO diagnostic metrics versus training steps for $k \in \{3,5,10\}$. Metrics peak during the GXPO-active phase and collapse after the GXPO-to-GRPO transition.
    }
    \label{fig:gxpo-diagnostics}
\end{figure*}

\begin{figure*}[h]
    \centering
    \includegraphics[width=\textwidth]{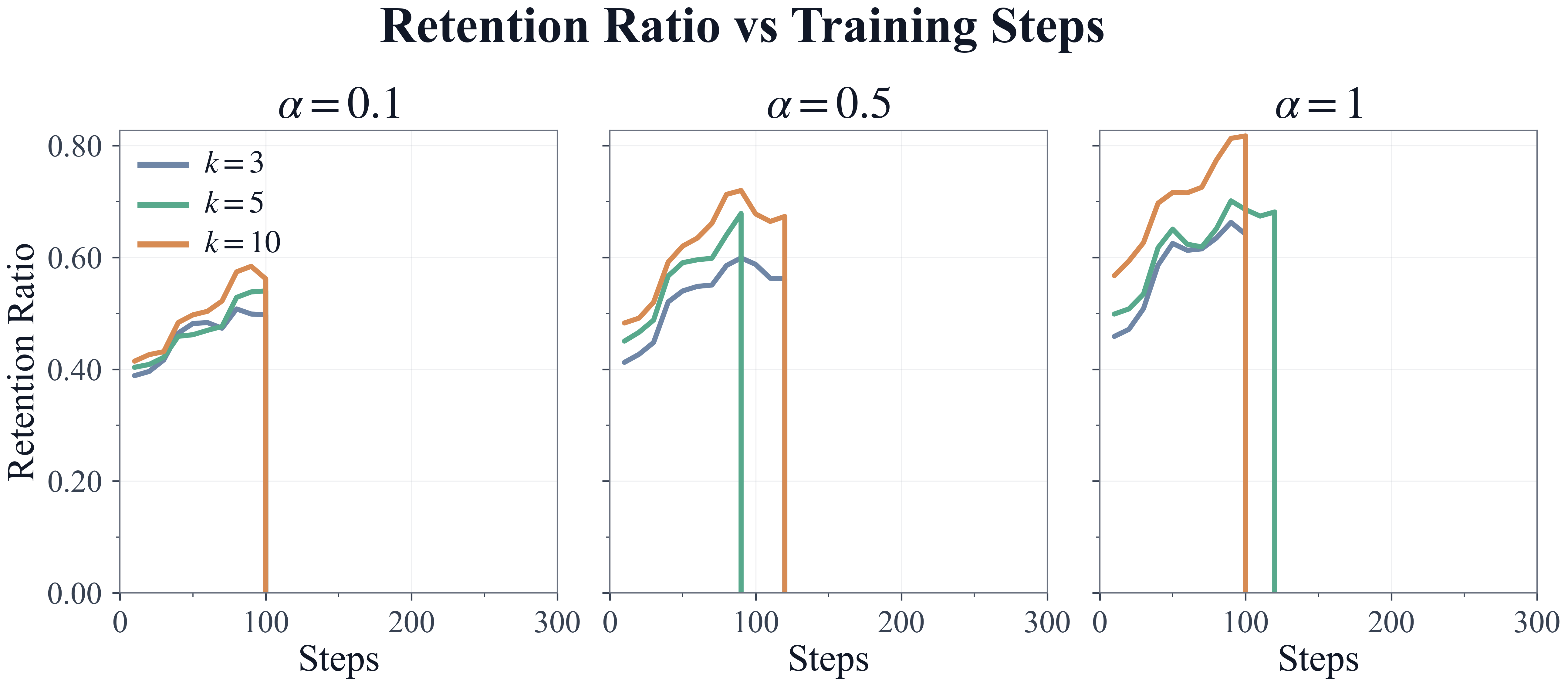}
    \caption{
        Retention ratio versus training steps across $\alpha \in \{0.1, 0.5, 1\}$ and $k \in \{3,5,10\}$. Retention increases during GXPO and drops sharply at shutoff.
    }
    \label{fig:retention-ratio}
\end{figure*}

\clearpage
\newpage

\section{Ablation Tables}
\label{app:llama_ablation_tables}

    This appendix provides compact ablation summaries for two model families. The first two tables use Llama3.2-3B and compare sampled pass@1 accuracy under matched backward-pass and wall-clock budgets. The next tables use Qwen2.5-1.5B on Math-500 and report the $\alpha$ sweep, surrogate-displacement diagnostics, and active-phase GXPO mechanism metrics. The final table reports KL and clipping diagnostics for the Llama3.2-3B runs.

\subsection{Iso-Backward-Pass Benchmark Comparison}
\label{app:iso_bp_llama}

Table~\ref{tab:iso_bp_mean_llama} compares Llama3.2-3B methods at matched total backward-pass budgets. The selective view reports sampled pass@1 accuracy on Math-500, GSM8K, and Minerva, where higher values indicate better budget-normalized reasoning performance.

\begin{table*}[h]
\centering
\caption{Sampled pass@1 accuracy at matched total backward-pass budgets for Llama3.2-3B. Selective view over Math-500, GSM8K, and Minerva.}
\label{tab:iso_bp_mean_llama}
\renewcommand{\arraystretch}{1.2}

\resizebox{\textwidth}{!}{%
\begin{tabular}{ll|ccc|ccc|ccc|c}
\toprule
\textbf{Method} & $\boldsymbol{k}$ 
& \multicolumn{3}{c|}{\textbf{BP=108}} 
& \multicolumn{3}{c|}{\textbf{BP=204}} 
& \multicolumn{3}{c|}{\textbf{BP=300}} 
& \textbf{Avg.} \\

& 
& \textbf{M-500} & \textbf{GSM8K} & \textbf{Minerva}
& \textbf{M-500} & \textbf{GSM8K} & \textbf{Minerva}
& \textbf{M-500} & \textbf{GSM8K} & \textbf{Minerva}
& \\
\midrule

GRPO & -- 
& 32.91 & 66.84 & \textbf{\textcolor{gaingreen}{12.50}}
& 33.91 & 67.06 & 8.18
& 33.85 & 66.84 & 9.77
& 36.87 \\
\midrule

\multirow{2}{*}{SFPO}
& 3 
& 32.82 & 67.13 & 9.66
& 32.94 & 67.21 & 11.14
& 32.61 & 66.99 & 10.34
& 36.76 \\
& 5 
& 32.24 & 66.68 & 11.25
& 33.05 & 66.47 & 11.48
& 32.90 & 67.02 & 10.91
& 36.89 \\
\midrule

\multirow{2}{*}{GXPO}
& 3 
& 32.48 & \textbf{\textcolor{gaingreen}{67.44}} & 12.05
& 33.85 & 67.33 & 10.34
& 34.76 & 67.34 & \textbf{\textcolor{gaingreen}{12.73}}
& \textbf{\textcolor{gaingreen}{37.59}} \\
& 5 
& \textbf{\textcolor{gaingreen}{33.38}} & 67.22 & 10.80
& \textbf{\textcolor{gaingreen}{34.23}} & \textbf{\textcolor{gaingreen}{67.35}} & \textbf{\textcolor{gaingreen}{11.25}}
& \textbf{\textcolor{gaingreen}{35.54}} & 67.24 & 11.02
& 37.56 \\

\bottomrule
\end{tabular}%
}
\end{table*}

\subsection{Iso-Wall-Clock Benchmark Comparison}
\label{app:iso_wc_llama}

Table~\ref{tab:iso_wc_mean_llama} repeats the Llama3.2-3B comparison at matched wall-clock checkpoints. This complements the backward-pass view by accounting for end-to-end runtime differences between GRPO, SFPO, and GXPO.

\begin{table*}[h]
\centering
\small
\caption{Sampled pass@1 accuracy at matched wall-clock times for Llama3.2-3B. Selective view; GRPO ran for 14.5 hours and therefore has no 16-hour evaluation.}
\label{tab:iso_wc_mean_llama}
\renewcommand{\arraystretch}{1.2}

\resizebox{\textwidth}{!}{%
\begin{tabular}{ll|ccc|ccc|ccc|c}
\toprule
\textbf{Method} & $\boldsymbol{k}$ 
& \multicolumn{3}{c|}{\textbf{4h}} 
& \multicolumn{3}{c|}{\textbf{8h}} 
& \multicolumn{3}{c|}{\textbf{12h}} 
& \textbf{Avg.} \\
 
& 
& \textbf{M-500} & \textbf{GSM8K} & \textbf{Minerva}
& \textbf{M-500} & \textbf{GSM8K} & \textbf{Minerva}
& \textbf{M-500} & \textbf{GSM8K} & \textbf{Minerva}
& \\
\midrule

GRPO & -- 
& 33.24 & 66.46 & 10.91
& 32.73 & 67.03 & 12.16
& 33.94 & 67.49 & 11.02
& 36.00 \\

\midrule

\multirow{2}{*}{SFPO}
& 3 
& 32.99 & 67.13 & 10.80
& 32.61 & 66.99 & 10.34
& 34.19 & 67.20 & \textbf{\textcolor{gaingreen}{11.93}}
& 36.58 \\
& 5 
& 32.15 & 67.09 & 11.70
& 33.38 & 67.12 & \textbf{\textcolor{gaingreen}{13.30}}
& 33.41 & 67.00 & 10.80
& 36.77 \\

\midrule

\multirow{2}{*}{GXPO}
& 3 
& 33.69 & 67.22 & \textbf{\textcolor{gaingreen}{12.73}}
& 34.67 & 67.50 & 12.39
& \textbf{\textcolor{gaingreen}{35.30}} & 67.48 & 11.59
& \textbf{\textcolor{gaingreen}{38.06}} \\
& 5 
& \textbf{\textcolor{gaingreen}{33.71}} & \textbf{\textcolor{gaingreen}{67.30}} & 10.45
& \textbf{\textcolor{gaingreen}{35.00}} & \textbf{\textcolor{gaingreen}{67.74}} & 11.82
& 35.25 & \textbf{\textcolor{gaingreen}{68.03}} & 11.14
& 37.72 \\

\bottomrule
\end{tabular}%
}
\end{table*}

\subsection{Qwen2.5-1.5B on Math-500 Ablations}
\label{app:qwen15_math500_ablation_tables}

Tables~\ref{tab:alpha_sensitivity}--\ref{tab:error_mechanism_diagnostics} summarize Qwen2.5-1.5B ablations on Math-500 through step 300. The $\alpha$ table reports the best pass@16 checkpoint for each configuration. The diagnostic tables connect the method to Theorem~\ref{thm:error_bound_main}: Table~\ref{tab:error_diagnostics} measures the displacement error of the geometric surrogate, and Table~\ref{tab:error_mechanism_diagnostics} checks the active-phase conditions under which the approximation should hold. The results match the theory's expected regime: larger $k$ increases extrapolation and measured error, but the errors remain small; interpolation lowers the realized error at $\tilde{\theta}$; inactive fallback stays limited; and $\cos(g_0,g_{\mathrm{slow}})$ remains close to $0.97$.

\begin{table*}[h]
\centering
\caption{Math-500 pass@16 sensitivity of GXPO to $\alpha$ on Qwen2.5-1.5B, computed from checkpoints up to step 300. Total BP and total hours are measured at step 300; best-pass columns record the earliest lowest-BP maximizer of pass@16.}
\label{tab:alpha_sensitivity}
\renewcommand{\arraystretch}{1.2}
\setlength{\tabcolsep}{4pt}
\resizebox{\textwidth}{!}{%
\begin{tabular}{ccccccccc}
\toprule
$\boldsymbol{k}$ & $\boldsymbol{\alpha}$ & \textbf{Shutoff step} & \textbf{Total BP}
& \textbf{Total hours} & \textbf{Step to best} & \textbf{BP to best}
& \textbf{Hours to best} & \textbf{Math-500 pass@16} \\
\midrule
\multirow{3}{*}{3}
& 0.1 & 100 & 500 & 11.89 & 280 & 480 & 11.19 & 68.40 \\
& 0.5 & 123 & 546 & 12.43 & 210 & 456 & 9.39 & 69.80 \\
& 1.0 & 106 & 512 & 12.24 & 200 & 412 & 8.80 & 71.40 \\
\cmidrule(lr){1-9}
\multirow{3}{*}{5}
& 0.1 & 100 & 500 & 12.01 & 160 & 360 & 7.26 & 68.40 \\
& 0.5 & 92 & 484 & 11.89 & 300 & 484 & 11.89 & 70.00 \\
& 1.0 & 123 & 546 & 12.71 & 150 & 396 & 7.43 & 73.80 \\
\cmidrule(lr){1-9}
\multirow{3}{*}{10}
& 0.1 & 100 & 500 & 11.96 & 190 & 390 & 8.21 & 69.80 \\
& 0.5 & 123 & 546 & 12.77 & 260 & 506 & 11.32 & 73.00 \\
& 1.0 & 106 & 512 & 13.50 & 240 & 452 & 11.11 & \textbf{\textcolor{gaingreen}{78.20}} \\
\bottomrule
\end{tabular}%
}
\end{table*}

\begin{table*}[h]
\centering
\footnotesize
\caption{Absolute surrogate-displacement diagnostics for GXPO on Qwen2.5-1.5B over checkpoints up to step 300. The first two columns measure the main quantities in Theorem~\ref{thm:error_bound_main}: the error of the extrapolated point $\theta_K$ and the error after interpolation to $\tilde{\theta}$. Errors increase with $k$ but remain small, and interpolation consistently reduces the realized error. The displacement-cosine error is a directional diagnostic, not a term in the bound.}
\label{tab:error_diagnostics}
\renewcommand{\arraystretch}{1.2}
\setlength{\tabcolsep}{4pt}
\resizebox{\textwidth}{!}{%
\begin{tabular}{ccccc}
\toprule
$\boldsymbol{k}$ 
& \textbf{Med.\ $\theta_{K}$ abs.\ error}
& \textbf{Med.\ $\tilde{\theta}$ abs.\ error}
& \textbf{Med.\ disp.\ cosine err.}
& \textbf{Med.\ active $\cos(g_0,g_{\mathrm{slow}})$} \\
\midrule
3 & \textbf{\textcolor{gaingreen}{8.386e-10}} & \textbf{\textcolor{gaingreen}{5.084e-10}} & \textbf{\textcolor{gaingreen}{6.245e-06}} & \textbf{\textcolor{gaingreen}{0.970}} \\
5 & 2.124e-09 & 1.189e-09 & 1.360e-05 & \textbf{\textcolor{gaingreen}{0.970}} \\
10 & 5.789e-09 & 2.933e-09 & 5.610e-05 & 0.969 \\
\bottomrule
\end{tabular}%
}
\end{table*}

\begin{table*}[h]
\centering
\caption{Active-phase GXPO diagnostics from the Qwen2.5-1.5B $\alpha$ sweep. Medians are computed only over checkpoints where GXPO extrapolation is enabled. These statistics check whether the approximation assumptions are present in practice: gradient norms stay stable, active-set retention ratios remain bounded, scale grows with $k$, inactive-coordinate fallback is small, and the backward-pass cost remains fixed at three.}
\label{tab:error_mechanism_diagnostics}
\renewcommand{\arraystretch}{1.2}
\setlength{\tabcolsep}{4pt}
\resizebox{\textwidth}{!}{%
\begin{tabular}{cccccccccc}
\toprule
$\boldsymbol{k}$ 
& \thead{\textbf{Policy}\\\textbf{passes}}
& \thead{\textbf{Med. active}\\$\boldsymbol{\|g_0\|}$}
& \thead{\textbf{Med. active}\\$\boldsymbol{\|g_1\|}$}
& \thead{\textbf{Med. active}\\$\boldsymbol{\|g_{\mathrm{slow}}\|}$}
& \thead{\textbf{Med.}\\$\boldsymbol{\cos(g_0,}\boldsymbol{g_{\mathrm{slow}})}$}
& \thead{\textbf{Retention}\\\textbf{ratio}}
& \thead{$\boldsymbol{\|\Delta_K\|}\boldsymbol{/ \|\Delta_2\|}$}
& \thead{\textbf{Scale}\\\textbf{mean}}
& \thead{\textbf{Inactive}\\\textbf{frac.}} \\
\midrule
3 & 3 & 1.770e-02 & 1.772e-02 & 1.761e-02 & \textbf{\textcolor{gaingreen}{0.971}} & 0.519 $\pm$ 0.687 & 1.528 & 1.271 & 0.028 \\
5 & 3 & 1.880e-02 & 1.886e-02 & 1.881e-02 & 0.970 & 0.535 $\pm$ 0.682 & 2.477 & 1.696 & 0.028 \\
10 & 3 & \textbf{\textcolor{gaingreen}{1.702e-02}} & \textbf{\textcolor{gaingreen}{1.693e-02}} & \textbf{\textcolor{gaingreen}{1.685e-02}} & \textbf{\textcolor{gaingreen}{0.971}} & 0.627 $\pm$ 0.675 & \textbf{\textcolor{gaingreen}{4.540}} & \textbf{\textcolor{gaingreen}{2.884}} & 0.028 \\
\bottomrule
\end{tabular}%
}
\end{table*}

\subsection{KL and Clip Diagnostics}
\label{app:kl_clip_diagnostics}

Table~\ref{tab:kl_clip_diagnostics} reports PPO/GRPO clipping and KL diagnostics over 300 Llama3.2-3B training steps. These diagnostics check whether GXPO repositioning causes unusually frequent clipping or large KL activation.

\begin{table*}[h]
\centering
\small
\caption{KL and clipping diagnostics over 300 Llama3.2-3B training steps. Clip fractions remain close across GRPO, SFPO, and GXPO, indicating that GXPO repositioning does not substantially increase PPO/GRPO clipping. KL penalties are higher for GXPO at larger $k$ but remain small in absolute value.}
\label{tab:kl_clip_diagnostics}
\renewcommand{\arraystretch}{1.2}

\resizebox{\textwidth}{!}{%
\begin{tabular}{llcccc}
\toprule
\textbf{Method} & $\boldsymbol{k}$
& \textbf{Mean clip fraction}$\downarrow$
& \textbf{Max clip fraction}$\downarrow$
& \textbf{Mean KL penalty}$\downarrow$
& \textbf{Max KL penalty}$\downarrow$ \\
\midrule

GRPO & --
& 7.02e-4
& 9.15e-4
& 2.47e-7
& \textbf{\textcolor{gaingreen}{4.63e-7}} \\

\midrule

\multirow{2}{*}{SFPO}
& 3
& 6.93e-4
& \textbf{\textcolor{gaingreen}{8.43e-4}}
& \textbf{\textcolor{gaingreen}{1.88e-7}}
& 6.10e-7 \\
& 5
& 6.89e-4
& 8.67e-4
& 2.74e-7
& 9.03e-7 \\

\midrule

\multirow{2}{*}{GXPO}
& 3
& \textbf{\textcolor{gaingreen}{6.85e-4}}
& 8.73e-4
& 5.41e-7
& 2.53e-6 \\
& 5
& 6.91e-4
& 8.74e-4
& 1.05e-6
& 4.81e-6 \\

\bottomrule
\end{tabular}%
}
\end{table*}

\end{document}